\theoremstyle{plain}
\newtheorem{theorem}{Theorem}[section]
\newtheorem{proposition}[theorem]{Proposition}
\newtheorem{lemma}[theorem]{Lemma}
\newtheorem{corollary}[theorem]{Corollary}
\theoremstyle{definition}
\newtheorem{definition}[theorem]{Definition}
\theoremstyle{remark}
\newcommand{\R}{\mathbb{R}}
\renewcommand{\P}{\mathcal{P}}
\DeclareMathOperator*{\argmax}{argmax}
\DeclareMathOperator*{\argmin}{argmin}
\newcommand{\smalltilde}{\raise.17ex\hbox{$\scriptstyle\mathtt{\sim}$}} 
\newcommand{\smallsim}{\raise.17ex\hbox{$\scriptstyle\mathtt{\sim}$}} 
\title[]{Are Normalizing Flows the Key to Unlocking the Exponential Mechanism? }\thanks{Vandy Tombs, Robert Bridges co-first authors with equal contributions. Both are corresponding authors.}
\thanks{Code is available,  \url{https://github.com/bridgesra/expm_nf_mimic3_results_code}.}
\thanks{Special thanks to Cory Watson for making the big machines just work, to Adam Spannaus and Vishakh Gopu for their
wisdom with PyTorch, and to Ravi Tandon for insightful conversations.}
\thanks{ This manuscript has been co-authored by UT-Battelle, LLC under Contract No. DE-AC05-00OR22725 with the US DOE. The United States Government retains and the publisher, by accepting the article for publication, acknowledges that the United States Government retains a non-exclusive, paid-up, irrevocable, world-wide license to publish or reproduce the published form of this manuscript, or allow others to do so, for United States Government purposes. The DOE will provide public access to these results of federally sponsored research in accordance with the DOE Public Access Plan (\url{http://energy.gov/downloads/doe-public-access-plan}).}
\author{Robert A. Bridges}
\affiliation{%
  \institution{Oak Ridge National Laboratory}
  \city{Oak Ridge}
  \state{TN}
  \country{USA}}
\email{bridgesra@ornl.gov}
\email{robert.anthony.bridges@gmail.com}
\author{Vandy J. Tombs}
\affiliation{%
  \institution{Oak Ridge National Laboratory}
  \city{Oak Ridge}
  \state{TN}
  \country{USA}}
\email{tombsvj@ornl.gov}
\author{Christopher B. Stanley}
\affiliation{%
  \institution{Oak Ridge National Laboratory}
  \city{Oak Ridge}
  \state{TN}
  \country{USA}}
\email{stanleycb@ornl.gov}
\begin{document}
\begin{abstract}
The Exponential Mechanism (ExpM), designed for private optimization, has been historically sidelined from use on continuous sample spaces, as it requires sampling from a generally intractable density, and, to a lesser extent, bounding the sensitivity of the objective function. 
Any differential privacy (DP) mechanism can be instantiated  as ExpM, and ExpM poses an elegant solution for private machine learning (ML) that bypasses inherent inefficiencies of DPSGD. 
This paper seeks to operationalize ExpM for private optimization and ML by using an auxiliary Normalizing Flow (NF)---an expressive deep network for density learning---to approximately sample from ExpM density. 
The new method, ExpM+NF, provides an alternative to SGD methods for model training. 
We prove a sensitivity bound for the $\ell^2$ loss function under mild hypotheses permitting ExpM use with any sampling method. 
To test model training feasibility, we present results on MIMIC-III health data comparing (non-private) SGD, DPSGD, and ExpM+NF training methods' accuracy and training time. 
We find that a model sampled from ExpM+NF is nearly as accurate as non-private SGD, more accurate than DPSGD, and ExpM+NF trains faster than Opacus' state-of-the-art professional DPSGD implementation.
Unable to provide a privacy proof for the NF approximation, we present empirical results to investigate privacy including the LiRA membership inference attack of Carlini et al. and the recent privacy auditing lower bound method of Steinke et al. 
Our findings suggest ExpM+NF provides more privacy than non-private SGD, but not as much as DPSGD, although many attacks are impotent against any model. 
Ancillary benefits of this work include pushing the state of the art of privacy and accuracy on MIMIC-III healthcare data, exhibiting the use of ExpM+NF for Bayesian inference, showing the limitations of empirical privacy auditing in practice, and providing several privacy theorems applicable to distribution learning. 
Future directions for proving privacy of ExpM+NF and  generally revisiting ExpM with modern sampling techniques are given. 
% \textbf{Anonymized code is available} 
% \textbf{\url{https://github.com/expmnf/anon_expm_nf_mimic_experiments}}
% Code is available \url{https://github.com/bridgesra/expm_nf_mimic3_results_code}. 
\end{abstract}

\keywords{differential privacy, machine learning, density learning, exponential mechanism, privacy auditing, membership inference attack, electronic health records, DPSGD}

\maketitle
\sloppy %% allows variables to line wrap
\begin{figure*}
    \centering
    \includegraphics[width=\textwidth]{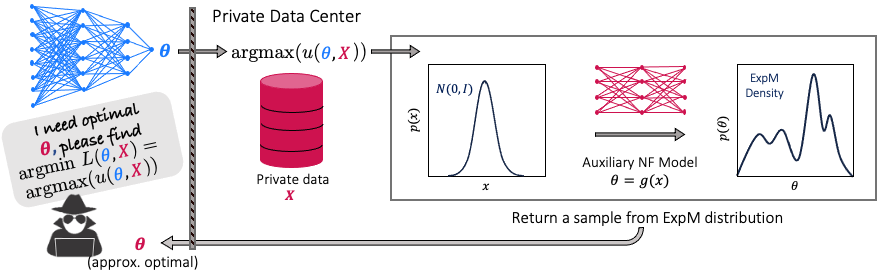}
    \caption{Figure depicts a to-be-publicized model trained on private data. New technique, ExpM+NF, trains an auxiliary Normalizing Flow (NF) model to approximately sample from the otherwise intractable Exponential Mechanism (ExpM) density to produce near-optimal model parameters $\theta$.} %Privacy parameters $(\varepsilon>0, \delta=0)$ are pre-specified by the private data center.}
    \label{fig:expm+nf}
    \vspace{-.25cm}
\end{figure*}

\section{Introduction} 
\label{sec:intro}
Winning the 2017 G{\"o}del prize, Differential Privacy (DP) has become an accepted standard for privacy--preserving analytics including machine learning (ML) applications \cite{xiong2014survey, ponomareva2023how, dajani2017modernization, appleDP, erlingsson2014rappor, ding2017collecting}. 
In practice, differentially private ML is achieved almost exclusively by Differentially Private Stochastic Gradient Descent (DPSGD) \cite{abadi2016deep, ponomareva2023how}, and DPSGD is considered the research state of the art as well. 
To achieve a privacy guarantee, DPSGD clips and adds Gaussian noise to each gradient during SGD training, and upon the conclusion of training, the many privacy guarantees (one per step) are collected into a single, overarching guarantee. 
Aiding DPSGD's widespread adoption is 15 years of work on composition theorems \cite{dwork2007sequential, dwork2010advanced, kairouz2015composition, dong2021gaussian} which include methods that tighten the privacy bounds for mechanisms that utilize subsampling (e.g., ML training with batches) \cite{abadi2016deep, mironov2017renyi}, and clever techniques that compose privacy curves using the privacy random variable \cite{gopi2021numerical}. 

Yet, in the conventional scenario of training ML on a single private dataset, DPSGD is wasteful by design---privacy is expended so that all gradients may be publicized, while all that is required is the trained model. 
This results in cumulative privacy degradation with each training step, of which thousands are usually needed.
Less problematic, but still not ideal, is that DPSGD can only provide approximate DP, i.e., $\delta>0$, due to relying on composition theorems and on the Gaussian Mechanism, although it does provide ``graceful'' (continuous) $\varepsilon-\delta$ tradeoffs (see Definition \ref{defn:dp}).
Despite 15 years of fruitful research advancing the composition theorems, sub-sampling methods, and implementation techniques, adequate accuracy and privacy have often proved unattainable in practice \cite{googleblog2022, suriyakumar2021chasing}.

Conversely, the Exponential Mechanism (ExpM), developed in 2007 for private optimization \cite{mcsherry2007mechanism}, has been historically sidelined from ML training by two  obstructions. 
Primarily, sampling from the required distribution is computationally infeasible except for finite or simple domains, reducing ExpM to multinomial sampling \cite{friedman2010data, kapralov2013differentially, vinterbo2012differentially}.
Notwithstanding these examples, ExpM is historically intractable for most ML training and more generally any application involving a high--dimensional continuous sample space \cite{kasiviswanathan2011what, ji2014differential, minami2016differential,  near2022programming}.
Secondly, a sensitivity bound must be proven for the given utility (loss) function, which is difficult in general, e.g., \cite{minami2016differential}. 
Notably, ExpM furnishes strong DP ($\delta = 0$) and has a proven guarantee on its utility (accuracy) \cite{mcsherry2007mechanism}. 
Further, sampling a model from ExpM bypasses the incremental decline in privacy suffered by DPSGD; hence, ExpM's application to differentially private ML is potentially transformational, not just in method but in the privacy-accuracy tradeoff afforded. 
Additoinally, it is known that any DP mechanism can be instantiated as ExpM \cite{mcsherry2007mechanism}. 
In short, work to unlock ExpM promises large impact for differentially privacy, especially in ML application, and hence, research to discover how to operationalize ExpM is due.

Meanwhile, sampling methods have progressed independently. 
In particular, Normalizing Flows (NFs), expressive deep networks for approximating intractable distributions, have emerged as  alternatives to Markov Chain Monte Carlo (MCMC) techniques  that provide rapid sampling once trained, bypass inefficiencies of MCMC, and are generally effective on high--dimensional settings \cite{gabrie2021efficient, papamakarios2021normalizing}.
% MCMC sampling methods have been considered for sampling from ExpM for ML training. 
Hence the pairing of ExpM with an NF as a sampling methodology is a now-available solution that can potentially unlock the ExpM!

In this paper, we introduce ExpM+NF, in particular, as a general method of model training, depicted in Figure \ref{fig:expm+nf}. 
Instead of an SGD scheme, we leverage ExpM to optimize the loss function, and employ an auxiliary NF model for sampling (ExpM+NF). 
As our results show, it is an effective method providing very accurate results, but we are unable to provide a proof of privacy for the NF approximation of ExpM's density. 
By design ExpM+NF would eliminate the iterative privacy loss observed in DPSGD and offers potential for a strong, user-specified privacy guarantee ($\varepsilon >0, \delta = 0$) with near optimal parameters.

Below, we document our research to explore if and exactly how one may train an ML model via ExpM+NF; we examine its computational expense, accuracy, and privacy. 
We include positive and negative results, as well as identified limitations and potentially fruitful future directions. 
Overall, we believe the idea of using modern sampling methods with ExpM  has the potential to transform differentially private ML. 
Our hope is that the directions put forth in this document are fodder for this research community to identify a successful path in this regard.

\subsection{Contributions and Findings}
% Our specific contributions and summarized findings are: 

\textbf{Contribution 1:} We prove a sensitivity bound for $\ell^2$ (equiv. MSE) loss for supervised learning and regression tasks with bounded targets (Theorem \ref{thm:sensitivity}). 
This accommodates classification tasks via binary/one--hot encoding, permitting use of ExpM with any sampling technique.

\textbf{Contribution 2:} We exhibit that ExpM+NF is a viable ML training method (Section \ref{sec:experiments}). 
We present experiments comparing (non-private) SGD, DPSGD, and ExpM+NF methods to train logistic regression (LR, $\smallsim$ 2500 parameters) and GRU-D (a recurrent deep neural network, $\smallsim$15-100K parameters) models, on MIMIC-III (tabular electronic health) data \cite{johnson2016mimic}. 
(Here we treat $\tilde\varepsilon$ as a training parameter and do not claim any privacy guarantees for ExpM+NF).
Our results are exceptionally positive. 
ExpM+NF achieves an Area Under the ROC curve (AUC) greater than 94\% of the non-private baseline for $\tilde\varepsilon$ as small as $1\mathrm{e}{-3}$ in every experiment, and is far more accurate than DPSGD for small $\tilde\varepsilon$. 
We present computational benchmarking results showing our ExpM+NF implementation  is faster than Opacus' \cite{pytorchOpacus} implementation of DPSGD.
%\textit{We do not claim DP is achieved for these experiments}. %although we notice that ExpM+NF achieves similar accuracy with orders of lower $\varepsilon$ than DPSGD.
Consequently, investigation of the privacy of ExpM+NF is due---if privacy of ExpM+NF is attained then this method substantially improves the state of the art. 

\textbf{Contribution 2.b}: An ancillary benefit of the previous experiment is that our DPSGD implementation is the state of the art (in privacy and accuracy) on the MIMIC-III ICU binary prediction benchmarks. See Appendix \ref{sec:appendix-dpsgd-advancements}. 

\textbf{Contribution 2.c}: Using ExpM+NF is essentially training the NF to approximate a posterior distribution. In the non-private scenario, ExpM+NF facilitates Bayesian inference and uncertainty quantification (UQ) as it allows rapid sampling of parameters from the posterior. A simple example is exhibited, see Appendix \ref{sec:appendix-uq}.

\textbf{Contribution 3}: Just as accuracy on a held-out test set is used to gauge accuracy, we implement membership inference attacks (MIA) against (non-private) SGD, DPSGD, and ExpM+NF to gain empirical results on the methods' privacy. 
Using the (state of the art MIA) Likelihood Ratio Attack (LiRA)  \cite{carlini2022membership} our conclusions vary per model. 
When using the LR model, we find that  DPSGD is essentially unattackable, while ExpM+NF is less resilient than DPSGD but more so than non-private SGD. 
Our LiRA results for the GRU-D model are inconclusive as all models are essentially unattackable.
In short, the evidence is that ExpM+NF provides some privacy, but not as much as DPSGD for the same $\tilde \varepsilon$ parameter, yet it provides better accuracy for the same $\tilde \varepsilon$. 

\textbf{Contribution 4:} Recent exciting  work by Steinke et al. \cite{steinke2024privacy} proves a lower bound on the privacy guarantee $\varepsilon$, that holds with known likelihood, and is computable from the accuracy of a black-box attack seeking to identify known ``canaries'' in the training set. 
We apply this  privacy auditing methodology, but find that in all cases the lower bound is 0. 
Our results provide a glimpse at the difficulties with privacy auditing in practice---if a model is resistant to the attack used by Steinke et al., then the auditing methodology is of little help.

\textbf{Contribution 5:} Our final contributions are  mathematical advancements towards a formal differential privacy proof for ExpM+NF. 
While privacy of ExpM+NF is not proven, we present more widely applicable math results, and future methods of proof as well as obstructions identified. 
Specific advancements include the following: 
We define a metric $d_{DP}$ on probability densities so that convergence in this metric preserves $\varepsilon$-DP, and show it is strictly stronger than uniform convergence, which can provide ($\varepsilon, \delta>0$)-DP under appropriate hypothesis. (This shows the need for the new metric). 
We provide a Privacy Squeeze Theorem for proving privacy of a mechanism whose density is sandwiched between two densities of known privacy and similarly a privacy Projection Theorem. 

\subsection{Limitations}
\label{sec:limitations}
There are three primary limitations with ExpM+NF, currently. 
First and foremost is the lack of privacy proof for ExpM+NF. 
(Interestingly, works on MCMC sampling methods with ExpM \cite{lin2024tractable, gopi2022private} provide privacy proofs with no implementation or testing, so feasibility of the methods are unknown). 
The primary obstruction and need is a proof that for any training dataset  $X\in D$ (in our universe of datasets), the trained NF must lie either close to the target density or in some relationship to densities that have known privacy bounds. 
Theorems and formal definitions to make these notions rigorous are the topic of Section \ref{sec:math}, along with future directions for research. 
Second, ExpM+NF requires a sensitivity bound for each loss function, and thus far we have only provided such a proof for $\ell^2$ (equiv. mean squared error) loss; while this does not prove limiting in our experiments, future research is required for application to other loss functions. 
 By comparison, DPSGD is readily applicable to theoretically any loss function. 
Third, ExpM+NF requires a completely different training regime (training an auxiliary NF model whose loss function depends the desired model and its loss function); hence, custom software engineering is currently required for use on each model architecture (e.g., on LR, GRU-D, ...). 

We also identify experimental limitations. 
Hyperparameter searching was required for all methods and we ignored privacy of hyperparameter searching which is not reealistic. This choice was intentional---we aim to compare the very best DPSGD and ExpM+NF can perform. 
The limited abilities of both the empirical privacy auditing experiments (see Section \ref{sec:mia} containing results of the  LiRA membership inference attack of Carlini et al.  and black--box attack of Steinke et al.) narrow the aperture for empirically investigating ExpM+NF's privacy. 
Aside from impotence of these attacks, a degredation of defenses as the $\varepsilon$ increased is expected for both ExpM+NF and DPSGD and is not observed. 
This may be caused in part by the size and any obfuscation (purposefully before release by the curators, or inadvertently in the benchmark preparation pipeline, e.g., imputation of missing results) performed on the MIMIC-III data. 
Optimistically, the poor results of these attacks are good news for the health data community!

\section{Background}
\label{sec:background}
%Here we introduce preliminary technical topics and leave a 
%Previous works on benchmark datasets and experiments with differentially private ML that lead to our work are discussed in our Experimental Methodology Section \ref{sec:experiments}. 

A more thorough discussion of all background appears in Appendix \ref{sec:appendix-background}.
We denote the real numbers as $\mathbb{R}$; the universe from which datasets $X, X'$ are drawn as $D$ (so $X, X' \in D$); and use $X, X'$ to denote neighboring datasets, meaning they are identical except for at most one data point (one row). 
The function $u:\Theta \times D \to \R$ is a utility function we wish to maximize privately over $\Theta$.
In the setting of machine learning, a dataset from $D$ is $(X,Y)$ (features and targets, respectively) in lieu of simply $X$, and $\Theta$ denotes the space of parameters for an ML model $\hat{y}$. 
Given loss function $L(X, Y,  \theta)$, the utility is defined as $u: = \phi\circ L$, with $\phi$ strictly decreasing, and usually $\phi(t) = -t$. 
A mechanism $f(\theta, X)$ provides a probability distribution over $\theta$ for each $X\in D$. 
We use $p$ for probability densities and capital $P$ to denote the induced probability measure, i.e., for measurable set $A, P(A):= \int_A p(\theta)d\theta$.

\subsection{Differential Privacy Background}
% We begin by introducing when a mechanism is considered differentially private. 
\begin{definition}[Differential Privacy] 
\label{defn:dp}
A mechanism $f:\Theta \times D\to\mathbb{R}$ % (with $\Theta$ 
satisfies $(\varepsilon, \delta)$-Differential Privacy or is $(\varepsilon, \delta)$-DP if and only if 
$P(f(\theta, X)\in A)  \leq P(f(\theta, X') \in A) e^\varepsilon + \delta$
for all neighboring $X, X'\in D, \theta\in \Theta, \varepsilon \geq0$ and for all measurable sets $A$. When $\delta = 0$, we say $f$ is $\varepsilon$-DP.
\end{definition} 

The $\varepsilon$ term is an upper bound on the privacy loss. Approximate DP, or $(\varepsilon, \delta)$-DP, provides a relaxed definition---$\delta$ is the probability that the mechanism $f$ is not guaranteed to be $\varepsilon$-DP. 
This ``probability of failure'' interpretation has led many to suggest that  $\delta$ should be no larger than $1/|X|$ \cite{dwork2014algorithmic}. 

To achieve $(\varepsilon, \delta)$-DP in practice, one calibrates the randomness (mechanism's variance) by how sensitive the output of the mechanism is across all neighboring datasets. 

\begin{definition}[Sensitivity]
\label{defn:sensitivity}
The sensitivity of a mechanism $f:\Theta\times D\to \R $ is  $s = \sup |f(\theta,X)-f(\theta, X')|$ where the supremum  is over neighboring $X,X' \in D$, and all $\theta \in \Theta$. (If multiple mechanisms are present, say $f_1, f_2$ we will use $\Delta(f_1), \Delta(f_2)$ for their respective sensitivities rather than $s$. 
\end{definition}

% \noindent Once the sensitivity is determined, there are several DP mechanisms that have been shown to satisfy differential privacy. In this work, we utilize the exponential mechanism.

\begin{definition}[ExpM]
\label{defn:expm}
For utility function $u: \Theta \times D \to \R$ with sensitivity $s$, the Exponential Mechanism (ExpM) returns $\theta \in \Theta$ with likelihood  $p_{ExpM}(\theta; X) \propto \exp(\varepsilon u(\theta, X)/(2s))$.
\end{definition}
\begin{theorem}[ExpM satisfies $\varepsilon$-DP  \cite{mcsherry2007mechanism, dwork2014algorithmic}] 
The Exponential Mechanism (ExpM) i.e., sampling $\theta \sim p_{ExpM}(\theta, X) \propto \exp(\varepsilon u(\theta, X) / (2s))$ where utility function $u$ has sensitivity $s$, provides $\varepsilon$-DP.
\end{theorem}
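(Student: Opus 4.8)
The plan is to bound, for any fixed outcome $\theta \in \Theta$ and any pair of neighboring datasets $X, X' \in D$, the pointwise ratio of the two ExpM densities $p_{ExpM}(\theta; X)/p_{ExpM}(\theta; X')$ by $e^\varepsilon$, and then integrate this pointwise bound over an arbitrary measurable set $A$ to recover the defining inequality of Definition~\ref{defn:dp} with $\delta = 0$. Writing $Z(X) := \int_\Theta \exp(\varepsilon u(\vartheta, X)/(2s))\, d\vartheta$ for the normalizing constant, the density is $p_{ExpM}(\theta; X) = \exp(\varepsilon u(\theta, X)/(2s))/Z(X)$, so the ratio factors as
\[
\frac{p_{ExpM}(\theta; X)}{p_{ExpM}(\theta; X')} = \exp\!\Big(\tfrac{\varepsilon}{2s}\big(u(\theta, X) - u(\theta, X')\big)\Big)\cdot \frac{Z(X')}{Z(X)}.
\]

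First I would control the numerator factor: by the sensitivity bound (Definition~\ref{defn:sensitivity}), $|u(\theta, X) - u(\theta, X')| \le s$, hence $\exp(\tfrac{\varepsilon}{2s}(u(\theta, X) - u(\theta, X'))) \le e^{\varepsilon/2}$. Next I would control the partition-function ratio: for every $\vartheta \in \Theta$ we have $u(\vartheta, X') \le u(\vartheta, X) + s$, so $\exp(\varepsilon u(\vartheta, X')/(2s)) \le e^{\varepsilon/2}\exp(\varepsilon u(\vartheta, X)/(2s))$; integrating over $\Theta$ gives $Z(X') \le e^{\varepsilon/2} Z(X)$, i.e. $Z(X')/Z(X) \le e^{\varepsilon/2}$. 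Multiplying the two bounds yields $p_{ExpM}(\theta; X) \le e^{\varepsilon}\, p_{ExpM}(\theta; X')$ for all $\theta$. Finally, for any measurable $A$,
\[
P(f(\theta, X) \in A) = \int_A p_{ExpM}(\vartheta; X)\, d\vartheta \le e^\varepsilon \int_A p_{ExpM}(\vartheta; X')\, d\vartheta = e^\varepsilon\, P(f(\theta, X') \in A),
\]
which is exactly $(\varepsilon, 0)$-DP; the apparent asymmetry between $X$ and $X'$ is immaterial since the neighbor relation is symmetric, so the same argument with their roles exchanged gives the reverse inequality.

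The main obstacle is not the algebra but the well-posedness hiding in $Z(X)$: the argument implicitly assumes a fixed base measure on $\Theta$ against which $p_{ExpM}(\cdot;X)$ is a genuine probability density and that $0 < Z(X) < \infty$ for every dataset. On a finite or bounded $\Theta$ this is automatic, but for the continuous parameter spaces that motivate this paper one must assume (or separately verify) integrability of $\exp(\varepsilon u(\cdot, X)/(2s))$; if $Z(X)=\infty$ the density is undefined and the statement is vacuous, and if $Z(X)$ could vanish the ratio is ill-defined. I would therefore state at the outset that $\Theta$ carries a $\sigma$-finite base measure and that the utility is such that $Z(X)$ is finite and positive for all $X \in D$, and then run the ratio argument above; the remaining steps---the two $e^{\varepsilon/2}$ estimates and the final integration---are routine.
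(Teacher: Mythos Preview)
Your argument is the standard proof of the exponential mechanism's privacy guarantee---split the density ratio into the unnormalized-score factor and the partition-function factor, bound each by $e^{\varepsilon/2}$ via the sensitivity assumption, and integrate over $A$---and it is correct as written, including your caveat about needing $0 < Z(X) < \infty$ with respect to a fixed base measure on $\Theta$. The paper, however, does not supply a proof of this theorem at all: it simply cites \cite{mcsherry2007mechanism, dwork2014algorithmic} and moves on. So there is nothing to compare against in the paper itself; your write-up is essentially the textbook proof from those references (see, e.g., Theorem~3.10 in \cite{dwork2014algorithmic}), which proceeds by exactly the same two-factor $e^{\varepsilon/2}\cdot e^{\varepsilon/2}$ decomposition you use.
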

The original treatment of ExpM \cite{mcsherry2007mechanism} furnishes utility theorems---rigorous guarantees that bound how far ExpM samples can be from the optima.

\subsection{Normalizing Flows (NFs)}
\label{sec:nf}
We refer readers to the survey of Papamakarios et al. \cite{papamakarios2021normalizing}. 
An NF model is a neural network, $g=g_k \circ  ...  \circ g_1: \R_z^n \to \R_\theta^n$, with each layer $g_i:\R^n \to \R^n$, called a flow, that is invertible and differentiable. 
We let $\beta_{\text{NF}}$ denote the parameters of $g$ to be learned, and suppress this notation for simplicity unless needed. 
Notably, $g$ (and by symmetry, $g^{-1}$) preserves probability densities: 
if $p_z$ is a probability density, then $\theta \mapsto p_z( g^{-1}(\theta) ) | \det J_{g^{-1}} (\theta) |$ is also.%, which is proven by the change of variables induced by $\theta = g(z)$. 

Suppose the target density, $p^*_\theta$ can be computed up to a constant factor, but  we cannot sample from $p^*_\theta$.  
Let the base density $p_z$ be a tractable density over $\R^n$.  
Then $g$ can be trained to transform $p_z$ into $p_\theta$ so that $p_\theta \thickapprox p^*_{\theta}$ as  follows:  
iteratively sample a batch $z_i \sim p_z, i = 1, ..., N$ and minimize the empirical KL divergence of the base density $p_z$ with the pull-back of the target density, namely $ p^*_\theta(g(z))| \det J_g(z) |$. This is the ``Reverse KL'' (RKL) loss, which we approximate with a monte carlo estimate, 
% $KL[ \ p_z(z) \ \| \  p^*_{\theta} (g(z))| \det J_g(z) | \ ]\approxeq 
%  \frac{1}{N} \sum_{i=1..N}[ \log{p_z (z_i)}-\log{ p^*_{\theta} (g(z_i))| \det J_g(z_i) |} ].$
\begin{align*}
KL[& \ p_z(z) \ \| \  p^*_{\theta} (g(z))| \det J_g(z) | \ ]\approxeq 
 \frac{1}{N} \sum_{i=1}^N \log \frac{p_z (z_i)}{ p^*_{\theta} (g(z_i))| \det J_g(z_i) |}\ . 
\end{align*}
In practice our loss function neglects $\log{p_z (z_i)}$ since it does not include parameters of $g$. Further, $\log{ p^*_{\theta} (g(z_i))| \det J_g(z_i) |}$ is only required up to a constant allowing avoidance of intractable normalization terms, e.g., when $p^*(\theta) = e^{u(\theta)}/Z$.
% \begin{equation}
% \label{eqn:reverseKL}
% \mathcal{L} = - \frac{1}{N} \sum_{i=1}^N \log[ p^*_{\theta}(g(z_i)) Z] + \log | \det J_g(z_i) |
% \end{equation}
Once trained, $\theta = g(z) $ with  $z\sim p_z$ gives  approximate samples from previously intractable $p_\theta$! 

In this work, we leverage the Planar Flows of Rezende \& Mohamed \cite{rezende2015variational} and their more expressive generalization, Sylvester Flows \cite{berg2018sylvester}; the target is $p_\theta^* = p_{ExpM}$; and base, $p_z$, is Gaussian.

% What is needed to compute this loss function is efficient computation of, first,  the target density $p^*_\theta(\theta)$, and second, the log determinant Jacobian term $\log |\det J_g|$. 
% In the ExpM setting $\log(p^*_{\theta}(\theta_i)Z = u(\theta_i, X,Y)\varepsilon/(2s) $, so performant computation of the utility function is required. 
% Generally, NFs are designed to make computing the log determinant term computationally feasible (e.g., using flows $g_i$ so that $J_g$ is triangular or that satisfy Theorem \ref{thm:wa}) while balancing the expressiveness. 

%% Put algorithm in minipage so we have text wrapping it
% \begin{wrapfigure}{L}{0.5\textwidth}
%     \begin{minipage}{0.5\textwidth}
%         \vspace{-.5cm}

\begin{algorithm}%[H]
    \caption{Exp+NF}\label{alg:cap}
    \label{alg: expm+nf}
    \begin{algorithmic}
    \hrule height 0.04cm
    \STATE \textbf{Input:} Training set $(X,Y)$, a base distribution $p_z$, a loss function $L$, decreasing $\phi$, privacy budget $\varepsilon$, sensitivity bound $s$, epochs $T$.
    \STATE \textbf{Initialize} $\beta_{\text{NF}, 0}$ randomly
    \FOR{$t$ in $[T]$ \COMMENT{Train Auxiliary NF Model}}
    \STATE \textbf{Generate sample from base distribution}  
    \STATE $z_i \sim p_z, i = 1, ..., N$ %$\{z_i \sim p_z\}_{i=1}^{N}$}
    \STATE \textbf{Produce parameter sample \& Jacobian}
    \STATE \indent $\theta_i = g_{\beta_{\text{NF}, t}}(z_i)$ 
    \STATE \indent $\log \ |\det J_g(z_i) |$ (depends on NF) 
    \STATE \textbf{Compute utility for each sample}
    \STATE for each $i \in [N]$, $u(X,Y,\theta_i) = \phi \circ L(X,Y, \theta_i)$
    \STATE \textbf{Evaluate target log likelihood (up to a constant)}
    \STATE $\log[\ p^*_{\theta}(\{\theta_i\}) Z\ ] = \tfrac{\varepsilon}{2s}\sum\limits_{i=1}^N u(X,Y, \theta_i)$
    \STATE \textbf{Compute Reverse KL Loss}
    \STATE $\mathcal{L} = KL[ p_z(z_i)\ \| \   p_{\theta}^* (\theta_i)| \det J_g(z_i) | ] + C$ %using Equation 
    \STATE{\textbf{Update NF Model}}
    \STATE $\beta_{\text{NF}, t+1} = \beta_{\text{NF}, t} - \eta_t \nabla_{\beta_{\text{NF},t}} \mathcal{L}$ %(z_i, \beta_{\text{NF},t} X, Y)$ 
\ENDFOR
\STATE sample from base distribution, $z \sim p_z$
\STATE \textbf{Output} Model parameters, $\theta = g_{\beta_{\text{NF}, T}}(z)$
\hrule height 0.04cm
\end{algorithmic}
\end{algorithm} 

% \vspace{-1cm}
%    \end{minipage}
% \end{wrapfigure}

%%%

\section{ExpM+NF}
\label{sec:method}
Figure \ref{fig:expm+nf} depicts the setting that we will consider, training and publicly releasing an ML algorithm on private data, and the proposed method, ExpM+NF, as described in Algorithm \ref{alg: expm+nf}. 
For training data $(X,Y)$, we seek  $\theta$ that minimizes a loss function $L(X, Y, \theta)$.
ExpM requires specifying a utility function $u(X, Y,\theta)$ to be maximized in $\theta$; thus, $u = \phi \circ L$ for strictly decreasing $\phi$. 
Throughout, we will consider $u = -L$. 
To define the ExpM density, we must provide $s$, a bound on the sensitivity of $u$ (Defn. \ref{defn:sensitivity}). 
 (See Appendix \ref{sec:appendix-expm} for more details on $\phi$ and $s$). 
In this initial work, we consider $\ell^2$ loss functions ($\sum_{X,Y} (\hat{y}(x) - y)^2$) on classification tasks with model outputs and targets bounded by 1. %Here we show these conditions  give  $s = 1$.

\begin{theorem} [Sensitivity bound for $\ell^2$ loss]
\label{thm:sensitivity}
Suppose our targets and model outputs reside in $[0,1]$.
If
$u = -L = -\sum_{(x,y)} (\hat{y}(x, \theta) - y)^2$, 
then $s \leq 1$.
\end{theorem}

\begin{corollary}
$s\leq 1$ for classification tasks using binary/one--hot  targets and logistic/soft--max outputs with $\ell^2$ loss and $u = -L.$
\end{corollary}

\begin{proof} 
Define the utility to be the negative loss so $u = -L = \sum_{(x,y)\in (X,Y)} l(x,y,\theta)+ r(\theta)$  where $l(x,y,\theta)$ is the loss from data point $(x,y)$, and $r(\theta)$ is a regularization function.
For two neighboring datasets $(X,Y), (X', Y')$, let $(x,y)\in (X,Y)$ and $(x',y') \in (X', Y')$ denote the lone row of values that are not equal. %(dropping the index $i$). 
We obtain
$|u( X, Y, \theta) -u(X', Y', \theta)| = |l(x,y,\theta)-l(x',y',\theta)|$ (all terms of $L$ cancel except that 
of the differing data point); hence, bounding $s$ is achieved by Lipschitz or $\ell^\infty$ conditions on the per--data--point loss function $l$. 
Regularization $r$ does not affect sensitivity. 

Since $\hat{y}(x, \theta), y \in [0,1]$ we see 
$(\hat{y}(x, \theta) - y)^2\in [0,1]$ and $(\hat{y}(x, \theta) - y')^2\in [0,1]$ so 
$|u( X, Y, \theta) -u(X', Y', \theta)| = |(\hat{y}(x, \theta) - y)^2 - (\hat{y}(x', \theta) - y')^2|\leq 1$.
\end{proof}

% The last step is to sample $\theta \sim p_{ExpM} (\theta).$ 

Near optimal $\theta$ are found by sampling 
$$\theta \sim p_{ExpM} (\theta) \propto \exp [u(X,Y, \theta)\varepsilon/2s]$$
equipped with a pre-specified $\varepsilon$-DP guarantee.
This step replaces an SGD scheme but fills the same role, producing approximately optimal parameters.
% Unfortunately, $p_{ExpM}$ is a historically intractable density for continuous variables, so sampling directly is infeasible in most supervised learning applications. 
As sampling continuous $\theta \sim p_{ExpM}$ is intractable, we train an auxiliary NF model as discussed in Section \ref{sec:nf}.
%) to transform a Gaussian base density $p_z$ into our target density $p^*_{\theta}:= p_{ExpM}\propto \exp( u\varepsilon/(2s))$.  
NF training is computationally well suited for approximating the ExpM density. 
The NF's loss (Reverse KL ) only requires computing the determinant Jacobian (made tractable by the design of the NF) and $\log [p^*_{\theta}(\theta)Z] =  u\varepsilon/(2s)$, essentially the target model's loss function. 
Finally, if the NF output density is a good approximation of the ExpM density, a sample from the NF can be released with nearly $\varepsilon$-DP guarantee (see Theorem \ref{thm:convergence-dp}).

% \subsection{Choosing $u = \phi \circ L$}
% \label{sec:choosing-u}
% While any strictly decreasing $\phi$ theoretically ensures identical optimization, in practice $\phi$ must separate the near-optimal $\theta$ from those that are far from optimal. 
% This means the derivative must be a sufficiently large negative value over an appropriate range of $L(\theta)$ to discriminate high versus low utility $\theta$. 
% In particular, using a negative sigmoid function, which is flat ($\phi' \approxeq 0$) except in a limited neighborhood, is unwise, unless we know which range of $L(\theta)$ will contain the optimal values a priori and shift $\phi$ to discriminate these values. 
% Lastly, we note that choice of $\phi$ effects the sensitivity, and therefore may be designed to assist in producing the required sensitivity bound.

% \subsection{Bounding Sensitivity}
% \label{sec:sensitivity} 

\section{Testing ExpM+NF Model Training on MIMIC-III Data}
\label{sec:experiments}
% It is unclear if sampling parameters from an NF model can produce an accurate ML model especially, when ideally small $\varepsilon$ are used. 
To test the feasibility of ExpM+NF, we provide experimental results by training a logistic regression (LR) and a deep recurrent network (GRU-D) on MIMIC-III---a large, open, electronic health record database. 
We use the binary prediction task benchmarks ICU Mortality and ICU Length of Stay ($>$ 3 days) from Wang et al. \cite{wang2020mimic}. 
Results of non-private LR and GRU-D \cite{che2018recurrent} models are provided by Wang et al., and results from training via  DPSGD (with RDP accountant) appear in Suriyakumar et al. \cite{suriyakumar2021chasing}.  
To the best of our knowledge, Suriyakumar et al. \cite{suriyakumar2021chasing} present the state of the art in differentially private ML on these MIMIC-III benchmarks.
Unfortunately, their code is not provided. 
We replicate these benchmarks as closely as possible. 

For all tasks and models we implement a non-private (SGD) baseline, and DPSGD- and ExpM+NF-trained versions. 
ExpM+NF is currently constrained to using $\ell^2$ loss (needed for the sensitivity bound), while for SGD and DPSGD both $\ell^2$ and binary cross entropy (BCE) loss are used. 
% We vary $\varepsilon$ to see the accuracy-privacy tradeoff. 

As $\varepsilon$ decreases, variance of the ExpM distribution increases; intuitively this should lead to a greater likelihood of sampling a model with poor accuracy. 
Differentially private ML research generally considers $\varepsilon\in [1,10]$ \cite{ponomareva2023how}. 
We present results for $\varepsilon \in [1\mathrm{e}{-4}, 10]$.
% and, investigate results for $\varepsilon$ as low as $1\mathrm{e}{-4}$. 
% All experiments are coded in PyTorch.

%Data is split into a train/development (dev)/test split that is stratified, preserving class bias in each partition. 
For each $\varepsilon$, we performed a randomized grid search of hyperparemeters fitting on the train set and evaluating on the development set. 
For DPSGD, we use the PRV method of accounting via Opacus \cite{pytorchOpacus, yousefpour2021opacus}. 
%the usual hyperparameters (epochs, batch size, momentum, learning rate, and regularization parameter) function as usual. 
Our NF models use base distribution $N(\vec{0},  \text{diag}(\sigma^2) )$, 
and we treat $\sigma$ as a hyperparameter. 
We use AUC as our accuracy metric.
%Starting conditions can lead to anomalously good/bad models for all methods (non-private, DPSGD, ExpM+NF) so 
We present the median AUC on the held-out test set across ten models,  each with different random seeds, to provide robustness to outliers. 
For ExpM+NF training, we also train ten NF models (each is a density over our target classifier).  
We use the median AUC of 1K samples per each of the ten NF models. Box plots and per-NF AUC densities are plotted in the Appendix Figure \ref{fig:distribtions_grud_icu} for GRU-D ICU (and analogous plots for all experiments are in the code base, URL on front page).

As all our models are implemented in PyTorch, we use PyTorch's benchmark suite which (only) reports the mean timing of $n$ runs, and only times the GPU training time (no data IO, etc.). 
We use $n = 10$ runs (per task, model, training method, $\varepsilon$) on a single NVIDIA A100 80GB GPU and 128 CPUs with 2TB of shared RAM.
For DPSGD and ExpM+NF, we report the mean across all $\varepsilon$. 
For the non-private GRU-D implementation, we report timing with batch normalization which was fastest. 

Further details, including  information on the MIMIC-III data and benchmarks, our hyperparameter search, and how we present the results are in the Appendix \ref{sec:appendix-experiment-details}.

% MIMIC-III \cite{johnson2016mimic} is a large, open, electronic health record (EHR) database covering a diverse group of critical care patients at a medical center from 2001–2012. We use the binary prediction task benchmarks ICU Mortality and ICU Length of Stay ($>$ 3 days) from Wang et al. \cite{wang2020mimic}. 

% As an example use of the ICU Mortality and Length of Stay benchmarks, Wang et al. also present results of non-private LR and GRU-D \cite{che2018recurrent} models. Suriyakumar et al. \cite{suriyakumar2021chasing} replicate Wang et al.'s results on these two tasks for non-private baselines, then provide DPSGD (with RDP accountant) results as well.  
% To the best of our knowledge, Suriyakumar et al. \cite{suriyakumar2021chasing} present the state of the art in differentially private ML on these MIMIC-III benchmarks.
% Unfortunately, their code is not provided. 

% We replicate these benchmarks as closely as possible; more specifically, we implement the LR and GRU-D models with non-private and DPSGD with PRV accounting training to compare against ExpM+NF. For GRU-D, we follow Suriyakurmar et al. \cite{suriyakumar2021chasing} and when needed the original GRU-D work of Che et al. \cite{che2018recurrent}. 

% not using mimic file anymore
    \begin{table*}
\centering
\caption{Median AUC results on MIMIC-III Mortality and Length of Stay prediction experiments}
\label{tab:mimic_los_mort_results}
\begin{tabular}{lcc}
\toprule
% \multirow{2}{*}{\rotatebox[origin=c]{90}{\textbf{Mortality}}} 
    \rotatebox[origin=l]{90}{\textbf{\phantom{aaaaaaaaaa} Mortality, LR} }
        &  \includegraphics[width=0.4\linewidth]{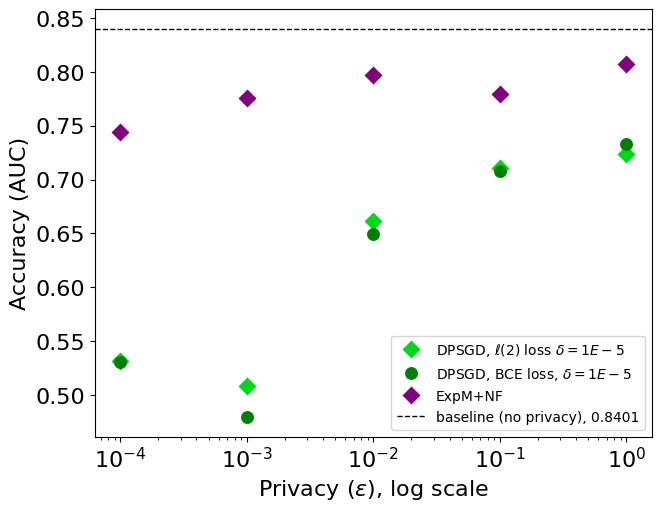} 
        &  \includegraphics[width=0.4\linewidth]{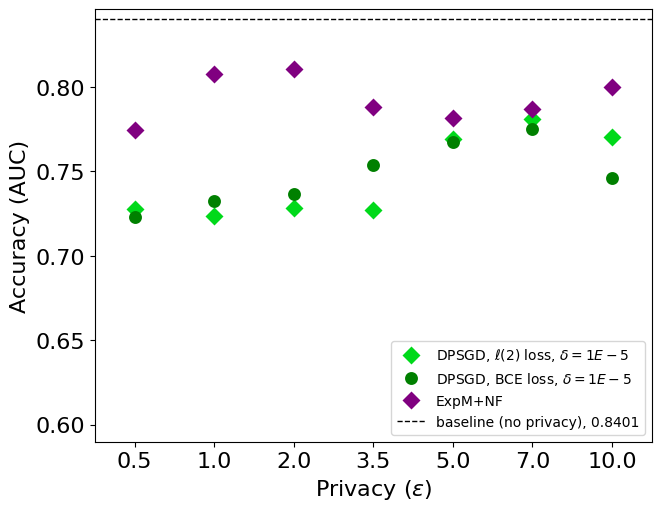}
       \\ 
    \rotatebox[origin=l]{90}{\textbf{\phantom{aaaaaaaa} Mortality, GRU-D} }
        &  \includegraphics[width=0.4\linewidth]{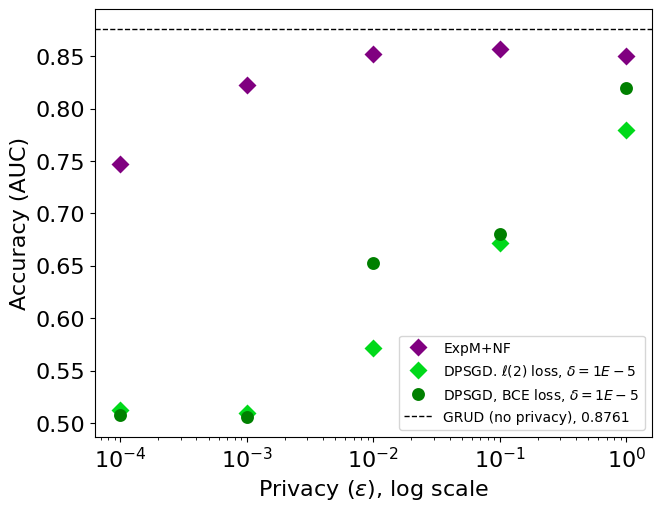}
        & \includegraphics[width=0.4\linewidth]{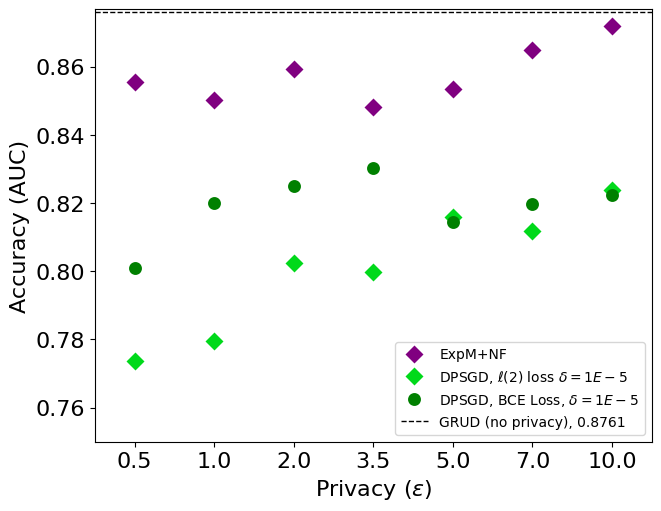}
        \\
        % \\
% % % \cmidrule(lr){2-3}              \cmidrule(lr){2-3}
\cmidrule(lr){1-1}  \cmidrule(lr){2-2}              \cmidrule(lr){3-3}
    \rotatebox[origin=l]{90}{\textbf{\phantom{aaaaaaaa} Length of Stay, LR} }
        &  \includegraphics[width=0.4\linewidth]{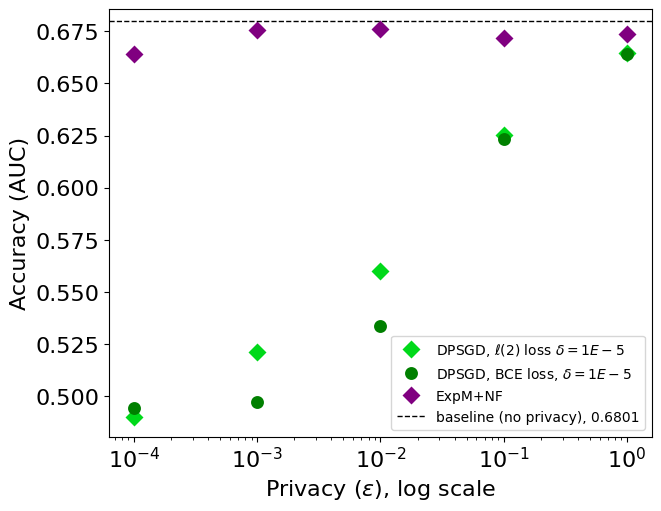} 
        &  \includegraphics[width=0.4\linewidth]{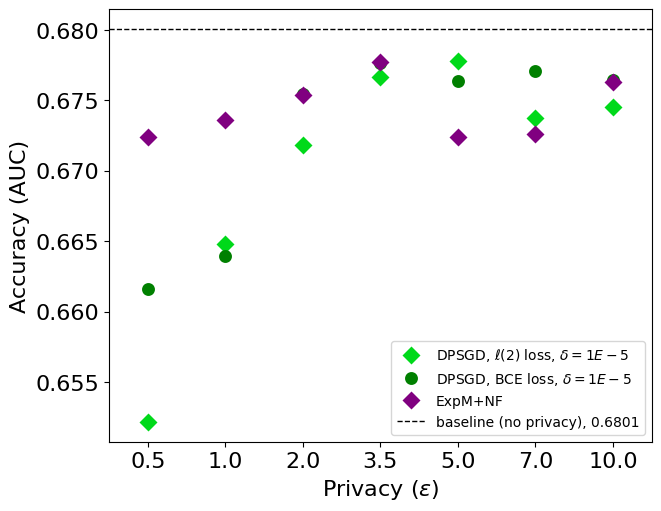}
       \\ 
    \rotatebox[origin=l]{90}{\textbf{\phantom{aaaaa} Length of Stay, GRU-D} }
        &  \includegraphics[width=0.4\linewidth]{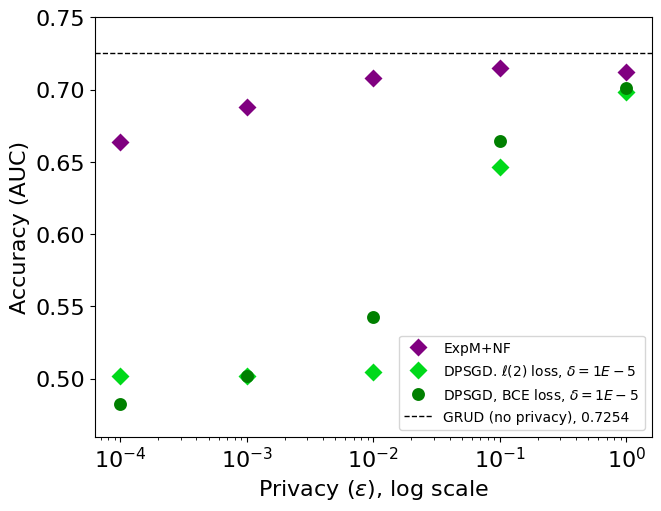}
        & \includegraphics[width=0.4\linewidth]{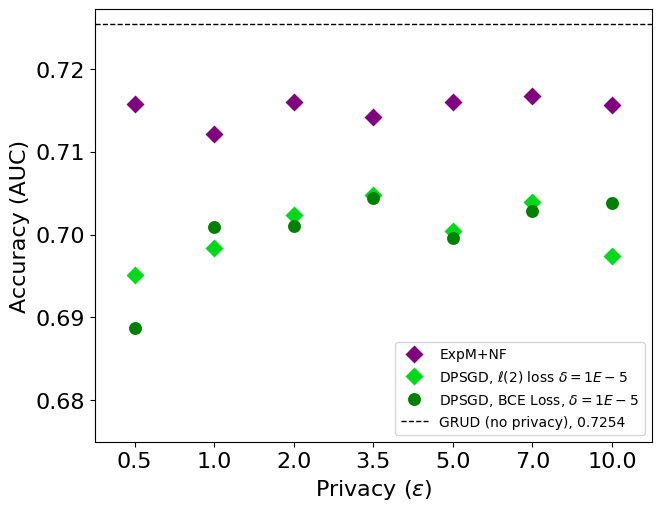}
        \\
\bottomrule
\end{tabular} 
\end{table*}

    \subsection{Accuracy and Timing Results}
\label{sec:accuracy-results}
\label{sec:times}

The table of figures \ref{tab:mimic_los_mort_results} displays accuracy as we vary $\varepsilon$. % for the ICU Mortality and Length of Stay task experiments with both LR and GRU-D.
In every experiment, ExpM+NF  exhibits greater than 94\% of the non-private models' AUC for all $\varepsilon\geq 1\mathrm{e}-3$. 
ExpM+NF surpasses DPSGD in all experiments except $\varepsilon \geq 2$ with the LR model on the Length of Stay task, in which all privacy methods achieve 99\% of the non-private model's AUC. 
Viewed through another lens, ExpM+NF maintains a greater AUC for $\varepsilon \geq 1\mathrm{e}-3$ than DPSGD for $\varepsilon = 1, \delta = 1\mathrm{e}-5$ in all MIMIC-III experiments, respectively. 
% For both prediction tasks, the AUCs achieved by ExpM+NF of the GRU-D model are, for all $\varepsilon \geq 1\mathrm{e}-3$,  better than the non-private LR or GRU-D models' results reported by Suriyakumar \cite{suriyakumar2021chasing}.
These results show that ExpM+NF is a viable training method even for $\varepsilon$ values orders of magnitude smaller than is currently considered. 
\textit{We do not claim ExpM+NF achieves $\varepsilon$-DP as this is not yet proven.}
Appendix \ref{sec:appendix-more-results} provides other findings including  NF distributions and results showing our DPSGD implementations push the previous state of the art for these MIMIC-III benchmarks. 

More discussion and takeaways of these results are provided in the Conclusion \ref{sec:conclusion}.

% \subsection{Timing results}
% Another aspect to the viability of this method is its impact on the training time performance (as it requires training an auxiliary NF model). 
Timing results, which report only the computational time due to training, appear in Table \ref{tab:times} for ICU Mortality experiments. 
Timing results for Length of Stay are similar and reside in Appendix \ref{sec:appendix-times} along with more details of this timing experiment.
Non-private (SGD) is the fastest, as expected). 
When including computation of the \texttt{noise_multiplier}, DPSGD is slower than ExpM+NF for both LR and GRU-D training. 
The innovation here is, when computing the RKL loss for NF training, our ExpM+NF code computes (and tracks gradients through) the $\ell^2$ loss for \textit{all} the sampled parameters $\theta_i = g(z_i)$ in parallel, rather than computing this sequentially. 
We used the best hyperparmeters for each model, so batch size and epochs vary between the different tasks and $\varepsilon$ levels. 

\begin{table*}[ht]
\centering
\caption{ICU Mortality Mean Benchmark Timing Results}
\label{tab:times}
\begin{tabular}{cc ccc ccc}
\toprule
& & \multicolumn{3}{c}{Logistic Regression} & \multicolumn{3}{c}{GRU-D} \\
        \cmidrule(lr){3-5} \cmidrule{6-8}
\multirow{2}{*}{Training} & \multirow{2}{*}{Loss} & Training & Computing & \multirow{2}{*}{Total} & Training & Computing & \multirow{2}{*}{Total} \\
    &   &  Time & $\varepsilon$ & &  Time & $\varepsilon$ & \\
\cmidrule(lr){1-2}              \cmidrule(lr){3-5} \cmidrule{6-8}
\multirow{2}{4em}{Non-Private} & $\ell^2$ & .87 ms & $-$ & .87 ms & .84 ms & $-$ & .84 ms\\
 & BCE & 1.67 ms & $-$ & 1.67 ms & .93 ms & $-$ & .93 ms\\
 \cmidrule(lr){1-2} \cmidrule(lr){3-5} \cmidrule{6-8}
 \multirow{2}{4em}{DPSGD} & $\ell^2$ & 1.16 ms & 1.13 ms & 2.29 ms & 1.38 ms & 1.44 ms & 2.82 ms\\
 & BCE & 1.56 ms & 1.2 ms & 2.76 ms & 1.46 ms & 1.34 ms & 2.8 ms\\ 
  \cmidrule(lr){1-2} \cmidrule(lr){3-5} \cmidrule{6-8}
ExpM+NF & RKL+$\ell^2$ & 1.65 ms & $-$ & 1.65 ms & 1.71 ms & $-$ & 1.71 ms
 
\\
\bottomrule
\end{tabular}
\end{table*}

\section{ExpM+NF for Bayesian Inference \& Uncertainty Quantification (UQ)}
\label{sec:uq}
Privacy diminishes with each sample of ExpM. 
In the private scenario (as our experiment above supposed) we consider the frequentist approach of finding a near optimal $\hat{\theta}$ (MLE or MAP) via a \textit{single sample} from ExpM+NF, and, given input $x$, predict $\mathbb{E}_y(y|x, \hat{\theta})$. 
Without these constraints, the rapid sampling capability of the ExpM+NF allows (1) MAP estimates by identifying the maximum utility sample and (2) facilitates Bayesian inference\textemdash
if $p(y|x,\theta) \propto \exp(-\varepsilon l(x,y,\theta)/(2s))$ and the prior $p(\theta)= \exp(-r(\theta))$ for regularization function $r$, then $p_{ExpM}(\theta| X, Y )$ is the posterior, from which ExpM+NF allows us to approximately sample $\theta_i \sim p_{NF}(\theta|X,y), i = 1, ..., N$, rapidly. 
Now one can: perform inference, given input $x$,  predict $\mathbb{E}_y(y|x) = E_\theta(p(y=1|x, \theta))\approx (1/N) \sum_i \exp(-\epsilon l(x,y,t_i)/2)$, the prediction encompassing the uncertainty in $\theta$; and compute credibility regions on $\theta$ and on $y$. Details and an example reside in Appendix \ref{sec:appendix-uq}.
\section{Empirically Privacy Experiments}
Just as measuring accuracy on a hold-out set can be a good proxy for generalization error,
we present two experiments to investigate the empirical measures of privacy of ExpM+NF to gauge actual privacy risk. 

\subsection{Membership Inference Attack}
\label{sec:mia}
In order to determine if ExpM+NF exhibits empirical evidence of privacy, we utilize the Likelihood Ratio Attack (LiRA) (Algorithm 1  of \cite{carlini2022membership}), the state--of--the--art  membership inference attack (MIA), to attack models trained with SGD (non-private), DPSGD, and ExpM+NF. 
Given a target model $\hat{M}$ and data point $x_0$, LiRA seeks to determine whether $x_0$ is in the training set. 
To do so, one trains ``shadow'' models, some with $x_0$ in the training data ($M_i^{in}$) and some that exclude $x_0$ from the training data ($M_i^{out}$). 
Using the shadow models, the confidence  $M_i^*(x_0)\in [0,1]$ is computed, then a logit scaling is applied giving values $\operatorname{logit}( M_i^*(x_0)) \in \R$ that are Gaussian distributed. 
The means and variances of the in/out sets are recorded. 
Finally, the likelihood of  $\operatorname{logit}(\hat{M}(x_0))$ %(the observed logit-scaled confidence of $x_0$  by the {\it target} model) 
is computed under both Gaussians to see if $\hat{M}$ is more likely trained with or without $x_0$.

% Using any specific membership inference attack to measure empirical privacy has several drawbacks, the most poignant of which is that as membership inference attacks get stronger, the models may exhibit more vulnerability than is represented here. However, we would hope that empirical measures of privacy can be a good proxy for the actual privacy risk of a model, much like measuring accuracy on a hold out set can be a good proxy for generalization error.
%First we implement and present results of the Likelihood Ration Attack (LiRA), the state of the art  membership inference attack (MIA), from Carlini et al. \cite{carlini2022membership} on non-private, DPSGD, and ExpM+NF trained models. 
%Given a target model $\hat{m}$ and data point $x_0$, the attack seeks to determine label $y_0$, a binary indicator of whether $x_0$ is in the training set. 
%two Gaussians are used to model the confidences when $x_0$ is in the dataset and when $x_0$ is not, respectively. 
%  two Gaussian models are used to compute the likelihood that $x_0$ is a member of the target models training set.

For our experiments, we train 1000 non-private shadow models using the best hyperparameters found from the grid search performed for the results in Section \ref{sec:experiments}. For implementation efficiency, rather than exclude a single data point from the training set to train a shadow model, we randomly exclude half the training data so that any one data point is included in the training data for approximately half of all shadow models. 
Q-Q plots are used to confirm the logit-scaled confidences are Gaussian. 
We then train 50 target models on a random half of the training data, for each method (non-private, DPSGD, ExpM+NF) and each model type (LR, GRUD), again using the best hyperparameters found in section \ref{sec:experiments}. 
As discussed by Carlini et al. \cite{carlini2022membership}, 
the low-false-positive region is most important for MIAs,  so we plot the median ROC across the 50 target models on a log-log scale, with 5-95\% quantiles shaded. 
We consider an attack unsuccessful if, in the low-false-positive region, the true-positive rate is at most the false-positive rate, meaning the ROC curve in this region is approximately the curve $y=x$ or below it.

\subsubsection{MIA Results}
Figure \ref{fig:qqplot} show a histogram of two sampled data point's logit scaled confidences when included or excluded from a shadow model's training set along with corresponding Q-Q Plots, showing the confidences are nearly Gaussian. 
As the two Gaussians are often nearly identical, accurate membership inference will be difficult. 
% Upon further inspection, we can see that for both these examples, the confidences when the data point is a member or not a member are nearly identically distributed, which as we will see makes the LiRA difficult. While we could not inspect all 17,500 individual data points in MIMIC, we found that these examples were representative.

\begin{figure*}
    \centering
    \includegraphics[width=.47\textwidth]{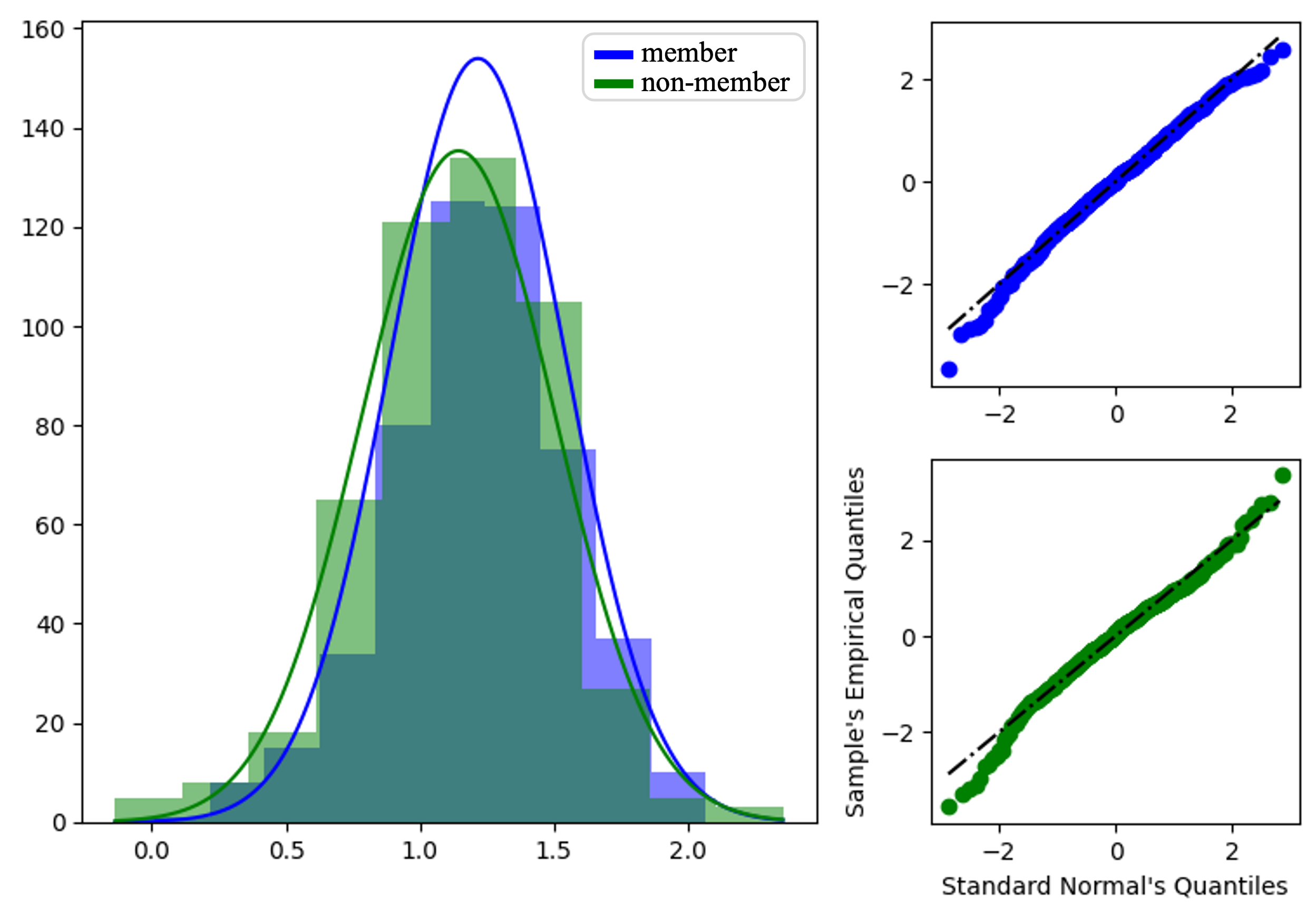}
    \includegraphics[width=.47\textwidth]{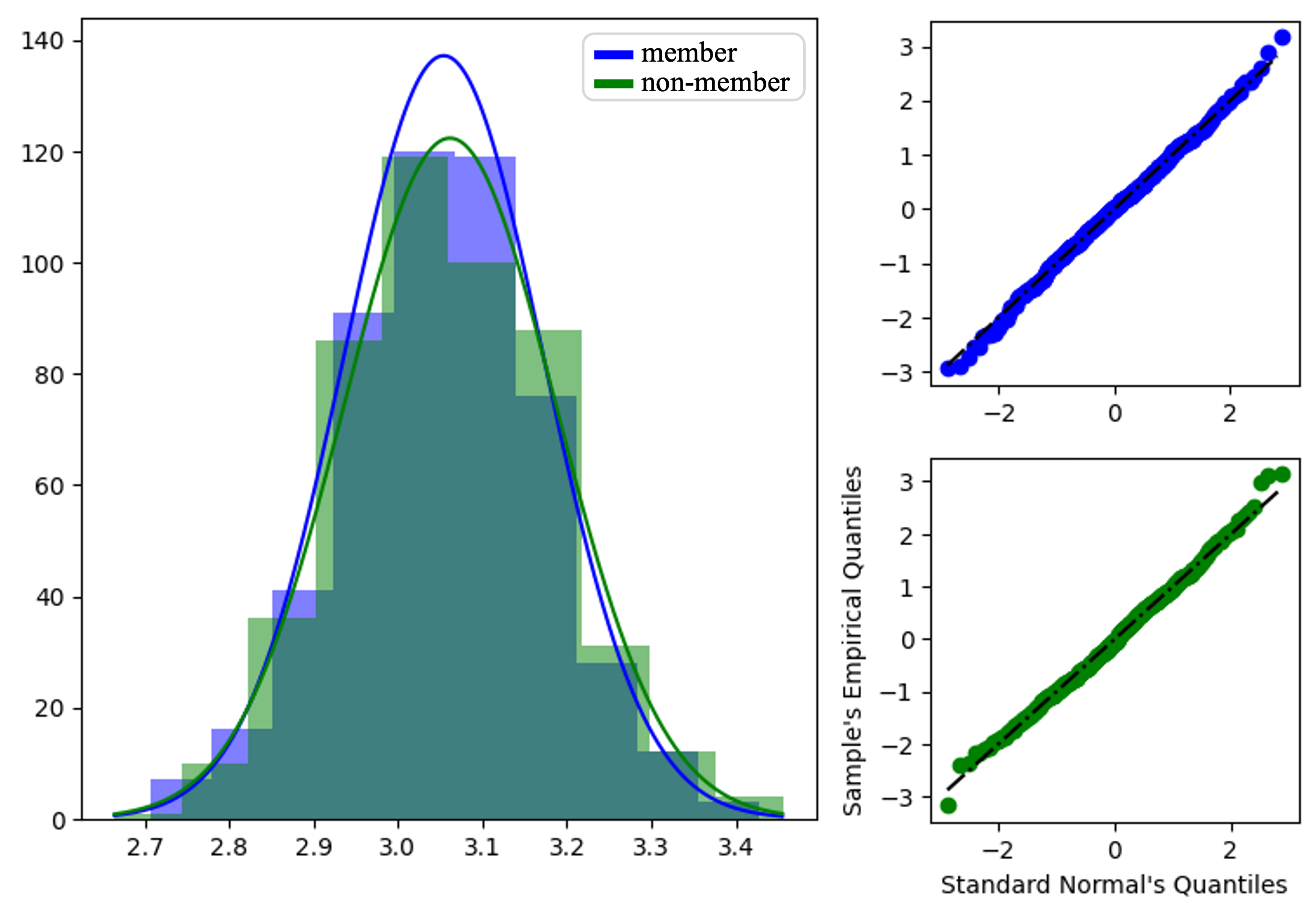}
    \caption{Both the left and the right plots depict histogram and Q-Q plots for a random data point. The blue are the logit scaled confidences when the data point is included in the shadow model's training set and the green are the logit scaled confidences when the data point is not a member. These show that the per-example logit scaled confidences are nearly Gaussian.}
    \label{fig:qqplot}
    \vspace{-.25cm}
\end{figure*}

Figure \ref{fig:lr_mort_icu_mia_auc} shows ROC curves for 9 different $\varepsilon$ values. 
ExpM+NF is unsuccessfully attacked for every $\varepsilon$ except potentially $\varepsilon = 2$ where ExpM+NF is slightly above linear. 
DPSGD is similarly, except in the $\varepsilon = 0.0001$ and $\varepsilon = 0.1$ case, where LiRA is worse than random chance. 
In all cases, ExpM+NF and DPSGD are more resistant to LiRA than non-private SGD. 
% While some may consider this an unsuccessful attack, we do not since if an attack were to learn their MIA classifier had this behavior, they could flip the labels and achieve the 

\begin{figure*}
    \centering
    \includegraphics[width=\textwidth]{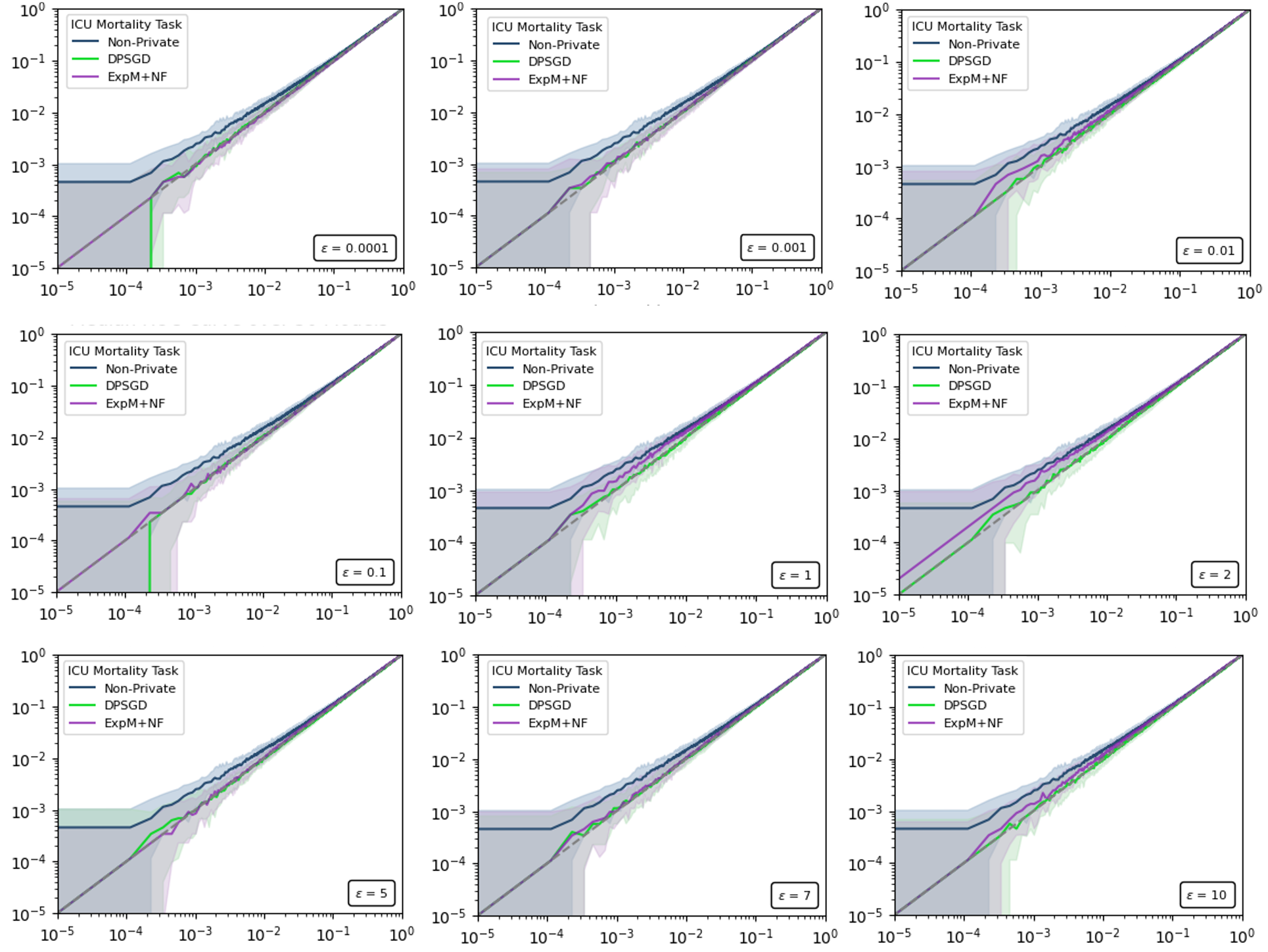}
    \caption{Likelihood Ratio Attack (median) ROC curve with 5th and 95th percentiles shaded on a log-log for Logistic Regression on Mortality Task.}
    \label{fig:lr_mort_icu_mia_auc}
    \vspace{-.25cm}
\end{figure*}

%While the above results look promising for ExpM+NF, we find that we 
% ExpM+NF performs much worse when we consider logistic models on the Length of Stay task. 
We see in Figure \ref{fig:lr_los_3_mia_auc}, that ExpM+NF is only unsuccessfully attacked when $\varepsilon = 0.0001$, and  
in every other case, is slightly better than non-private SGD. %ly are slightly easier to attack than those trained with ExpM+NF. 
DPSGD on the other hand is unsuccessfully attacked for every $\varepsilon$. 
Hence, ExpM+NF empirically provides more privacy than non-private SGD but less than DPSGD on this task, model. 

We provide plots for GRU-D models in the Appendix \ref{sec:appendix-lira}. 
Similar to logistic models on the Mortality task, GRU-D models trained with ExpM+NF are unsuccessfully attacked for all $\varepsilon$ across both Mortality and Length of Stay with the exception of the $\varepsilon = 0.0001$ on the Length of Stay task, where ExpM+NF does worse than random guessing. 
On the other hand, DPSGD is successfully attacked for $\varepsilon = 2$ and $\varepsilon = 7$ on the Length of Stay task, indicating that that even models with a differential privacy guarantee can be successfully attacked. Interestingly, models trained non-privately are entirely unsuccessfully attacked. 
This indicates that GRU-D trained with no privacy exhibits emperical privacy, and, hence, it may be difficult to interpret the empirical privacy from these results. 
We consider the GRU-D LiRA results to be inconclusive. 

\begin{figure*}
    \centering
    \includegraphics[width=\textwidth]{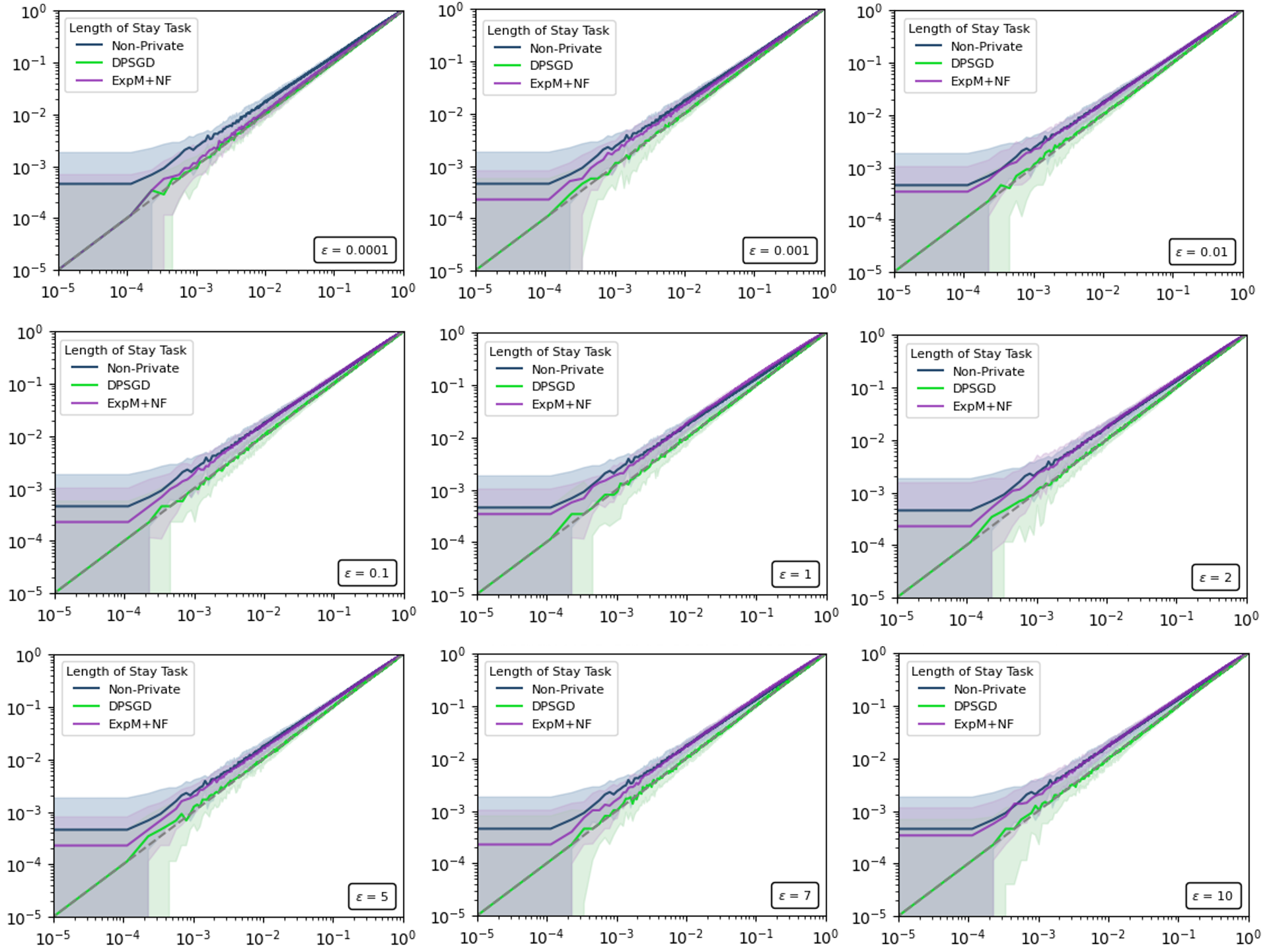}
    \caption{Likelihood Ratio Attack (median) ROC curve with 5th and 95th percentiles shaded on a log-log for Logistic Regression on Length of Stay Task.}
    \label{fig:lr_los_3_mia_auc}
    \vspace{-.25cm}
\end{figure*}

Unlike Carlini et al.\ who were able to see true-positive rates of $0.1$ at false-positive rate of $10^{-5}$, we are at best able to get a true-positive rate of $10^{-3}$ at that same false-positive rate when attacking a model trained non-privately. 
Additionally, outside of the low-false-positive region, our overall AUC is nearly random guessing across all methods and model types. There may be several reasons that the membership inference attack was less successful in our situation.
The first may be due to the number of parameters in the models we attack. Carlini et al.\ ``find that for both the CNN and WRN model families, larger models are more vulnerable to attacks.'' 
While none of the models we attack fall within these model families, the number of parameters in the models we attack are on the order of thousands to tens of thousands whereas in \cite{carlini2022membership}, the models likely have on the order of millions or tens of millions of parameters. Additionally, MIMIC III is a data set created from real health data and because of that certain privacy preserving methods were applied to the data prior to release that could have potentially led our attack to be less successful.

% In addition to the log-log (``zoomed'') ROCs, Figure \ref{}, we present the number of correctly re-identified data points using the loosest threshold that ensures the precision is > 0.99 in Figure \ref{}. 
%As $\hat{y}^*_k(x)\in [0,1]$ is interpreted as a confidence or probability. 

%To do so, for $k = 1, ..., 1000$, one trains ``shadow'' models, some with $x_0$ in the training data with label $\hat{y}^{in}_k$ and some that exclude $x_0$ from the training data $ \hat{y}^{out}_k$.
%As $\hat{y}^*_k(x)\in [0,1]$ is interpreted as a confidence or probability. 
%We apply the ``hinge loss'' function (a stable logit function) $\phi:[0,1]\to \mathbb{R}$, to obtain normally distributed in/out data: $\{ \phi \circ \hat{y}_k^{in}(x_0) \}_k \sim N(\mu_{in}, \sigma_{in}^2)$, and $\{ \phi \circ \hat{y}_k^{out}(x_0) \}_k \sim N(\mu_{out}, \sigma_{out}^2)$. 
%\hl{include histograms, qq plots}
%The likelihood ratio from the target model, $p[\phi \circ \hat{y}(x_0) ; N(\mu_{in}, \sigma_{in}^2)]/ p[\phi \circ \hat{y}(x_0) ; N(\mu_{out}, \sigma_{out}^2)]$ is our test statistic.  

% As argued by Carlini et al., the low-false-positive region of the ROC is most important. 
% In addition to the log-log (``zoomed'') ROCs, Figure \ref{}, we present the number of correctly re-identified data points using the loosest threshold that ensures the precision is > 0.99 in Figure \ref{}. 

% \subsection{Membership Inference Results}
% The attack is run for each of 50 target models trained on different subsets of the training data (??) and we use  50/50 in/out labeled data points.
% Our results show

\subsection{Privacy Auditing}
Recent work of Steinke et al. \cite{steinke2024privacy} proves a lower bound on the privacy $\varepsilon$, that holds with known likelihood, and is computable from the loss with just black-box access.
The intuition is that if an auditor can correctly guess membership of ``canary examples'', then almost certainly the model's privacy is weak. 
Here we apply this recent privacy auditing methodology. %in an attempt to obtain with known likelihood 

Using notation of Steinke et al., we let $m = |X|$ and use a uniform randomly chosen half of the data (the ``in'' set) for training a (single) model.  
Storing initially randomized parameters $\omega_0$ and trained parameters $\omega_1$, we 
use score function $x\mapsto l(x, \omega^0) - l(x, \omega^1)$, where $l$ is the point-wise loss function. 
Intuitively, $x$ is in the training set if it has relatively low/high trained loss $l(x, \omega^1)$, equivalently, a high/low score, respectively. 
Sorting the scores, we label the top $k_{+} = m/2$ scored data points as $1$ (in), bottom $k_{-} = m/2$  as $-1$ (out), and record $v$, the number of correct labels by this audit. 
Setting p-value $\beta$  and a given $\delta$, Steinke et al. \cite{steinke2024privacy} (see their Appendix D) uses the number of correct/incorrect guesses and furnishes the computable lower-bound, 
$\varepsilon^{lb}$ on the true privacy, $\varepsilon$, that holds with likelihood $1-\beta$.

We set $\delta = 0$ and then $\delta = 10^{-5}$, and use p-value $\beta = .05.$
For each training method, (non-private SGD, DPSGD, ExpM+NF), we repeat the experiment 50 times to obtain and sort lower bounds $\varepsilon^{lb}_k, k = 1, ..., 50$. 
Since each has 95\% confidence of being a lower bound, we report the 95-th percentile ($\varepsilon^{lb}_{k=48}$ third highest value).  
To implement the lower bound computations, we use code provided by Steinke et al. 

\subsubsection{Privacy Audit Results}
We found for every method (non-private SGD, DPSGD, ExpM+NF) and every model type (LR, GRU-D) across both benchmark tasks (Mortality and Length of Stay)  that $\bm{\varepsilon}^{\bm{lb}} \mathbf{= 0}$, irrelevant of the $\tilde \varepsilon$ value used to train DPSGD or ExpM+NF. 
These unhelpful results are evidence that the MIAs against all three training methods are impotent. 
As one of the first deployments of this new, exciting auditing, our results give a glimpse at the limitations met in practice. 
% This further supports the findings in the last section that the empirical privacy measures are inconclusive for showing the privacy (or lack thereof) for ExpM+NF. 

\section{Towards ExpM+NF Privacy Proof}
\label{sec:math}
For this section we will fix the loss / utility function $u$ and privacy parameter ${\varepsilon}$. 
Let $\P$ denote the set of probability densities on $\R^n$. 
We consider the Exponential Mechanism $p: D\to \P$, i.e., for a dataset $X\in D$ we let $p_X(t) = p(X,t)$ denote the ExpM density. 
Similarly for ExpM+NF, we let $q: D\to  \P$ with $q_X(t) = q(X,t)$ denoting the trained NF density approximating $p_X.$
Now, we know that $\log  [ p_X/p_X' ] \leq {\varepsilon}$ for all neighboring $X, X'\in D$. 
Our goal is to show ExpM+NF attains $h(\varepsilon)$-DP for some function $h$; hence, we seek to prove that $\log  [ q_X/q_X' ] \leq h(\varepsilon)$. 
Although we have not proven privacy of ExpM+NF, this section has two goals. 
The first is to document obstructions to proving ExpM+NF privacy. 
The second is to present mathematical results that seem generally useful to the community. % privacy of $q$, an approximation of target density $p$ that  is known to satisfy $\varepsilon$-DP. 

Direct proof techniques of ExpM+NF privacy are problematic. 
Tracking the maximal possible changes of the parameters of the NF, say $\beta_X, \beta_X'$, through each gradient step leads to compounding differences, even when ignoring different random starting positions of the parameters. 
Further, one must translate the NF parameters $\beta_X, \beta_X'$ into the desired estimate $|q_X/q_X'|$. This seems problematic as many different NF instantiations can lead to similar, even identical output densities; e.g., the three layer NF $z\mapsto 3z \mapsto z/3$ provides the identity map, which is clearly same as the NF $z\mapsto z \mapsto z$, the identity  in each layer. 
Instead, in Section \ref{sec:math-convergence-results}, we consider the more general case of a sequence of densities $q_n(X, t)$ converging to density $p(X,t)$  which satisfies $\varepsilon$-DP, and seek mathematical developments  furnishing  privacy of $q_n(X, t)$ from $p(X,t)$. 
In our specific case of ExpM+NF, this converts the problem to proving a convergence theorem for the NF approximation. 
Finally, in Section \ref{sec:squeeze-results}, we provide theorems to furnish privacy of a mechanism $c$ that lie between two mechanisms $a, b$ that have a known privacy guarantee, in the sense of pointwise inequality for all datasets $X.$ 
Intuition for why this is potentially useful for ExpM analysis is given. 
In each subsection, we include more details of mapping these theorems to our ExpM+NF goals after the mathematical developments. 

\subsection{Results on Convergence of Differentially Private Mechanisms}
\label{sec:math-convergence-results}
The results in  this section provide hypotheses giving privacy of an approximation $q$ of the intractable density $p$ for which a privacy guarantee is known. 

\begin{definition}[DP Distance on $\P$]
$d_{DP}(p,q) = \| \log(p)-\log(q)\|_\infty$
\end{definition}
DP distance is a metric since: $d_{DP}\geq 0$;  $d_{DP}$ satisfies the triangle inequality because $\|\cdot\|_\infty$ is a norm; and   $\log$ is 1-1, which gives  $d_{DP}(p,q) = 0 \iff p=q $ almost everywhere. 

The following proposition shows $d_{DP}$  provides a strictly stronger topology than 
$L^{\infty}$. 
More toward our aim, 
Theorem $\ref{thm:convergence-dp}$ shows that convergence in $d_{DP}$  preserves $\varepsilon$-DP, whereas Theorem \ref{thm:uniform-convergence} shows that uniform convergence (in $L^\infty$) only guarantees ($\varepsilon, \delta>0$)-DP, which shows the need for this definition of distance.

\begin{proposition}\label{prop:d_dp}
    Convergence in $d_{DP}$ implies uniform convergence, but not conversely. 
\end{proposition}
\begin{proof}
    Suppose $q_n\xrightarrow[]{d_{DP}} p$. This means $\sup_t |\log(q_n(t)) - \log(p(t))| \to 0$. Since $\exp(\cdot)$ is continuous, we see $\sup_t |q_n(t)-p(t)|\to 0$. 
    To see the converse is false, let 
    \begin{equation}
    \label{eqn:qn}
    q_n = \left\{\begin{array}{cc}
         \tfrac{1}{n} & \text{on } [0, 1/2) \\
         2 - \tfrac{1}{n}& \text{on } [1/2, 1] 
    \end{array}\right.
    \end{equation}
    To see $q_n$ converges uniformly, compute $|q_n - q_m| = |\tfrac{1}{n}-\tfrac{1}{m}|$, showing the sequence is uniformly Cauchy.
    To see $q_n$ does not converge in DP distance, compute
    $d_{DP}(q_n, q_m) = \sup_t |\log(q_n/q_m)| = |\log(m) - \log(n)|$, not Cauchy. 
    % The discontinuity of $\log$ near 0 is an obstruction. 
\end{proof}

\begin{theorem}
\label{thm:convergence-dp}
Suppose $p$ satisfies $\varepsilon$--DP and 
$d_{DP}(p_X,q_X) \leq \eta$ for all $X\in D$. 
Then $q$ satisfies $(\varepsilon + 2\eta)$--DP. 
\end{theorem}
\begin{proof}
By hypothesis, $e^{-\eta}\leq \tfrac{q_X}{p_X} \leq e^{\eta}$ for all $X\in D$.
For neighboring $X,X'\in D,$
$ \tfrac{q_X}{q_X'} =  \tfrac{q_X}{p_X} \tfrac{p_X}{p_X'} \tfrac{p_X'}{q_X'} \leq e^{\eta}e^{\varepsilon}e^{\eta}.$  
\end{proof}

% pretty sure this is true, but useless b/c you get strong privacy from the theorem above
% \begin{corollary}
%     Suppose mechanism $p$ satisfies $\varepsilon$-DP and the sequence of mechanisms $q_n(X,t)\xrightarrow[]{d_{DP}} p(X,t)$ uniformly in $X$ as $n\nearrow \infty$. Then for all $\eta>0$, $\exists N>0$ so that $n\geq N $ implies $ q_n$ is $(\varepsilon+2\eta, \delta)$-DP. 
% \end{corollary}

\begin{theorem}\label{thm:uniform-convergence}
    Suppose mechanism $p$ satisfies $\varepsilon$-DP and the sequence of mechanisms $q_n(X,t)\to p(X,t)$ uniformly in $X, t$ as $n\nearrow \infty$ (here we have convergence in $L^\infty$). 
    Then for all $\delta\in (0,1), \eta > 0, \exists N>0$ so that $n\geq N $ implies $  q_n$ is $(\varepsilon+2\eta, \delta)$-DP. 
\end{theorem}
\begin{proof}
Since $\log(t) \leq t-1$ and $\log(t) \approx t-1$ near $t = 1$, 
$$d_{DP}(p_n,p) = \left\|\log\left(\frac{p_n}{p}\right)\right\|_\infty \leq \left\|\tfrac{p_n}{p} -1\right\|_\infty =  \left\|\tfrac{p_n-p}{p} \right\|_\infty .$$
Choose $r>0$ small enough so that $P_X(\{p<r\})\leq \delta$.
Then on $\{p<r\}^c = \{p\geq r\}$ we have $d_{DP}(p_n,p) \leq \|p-p_n\|_\infty/r.$
Now choosing $N$ large enough so that for all $n>N, \|p-p_n\|_\infty < \varepsilon r$ gives that $d_{DP}(p_n, p) \leq \varepsilon$ except on a set  with likelihood $\delta$. 
The result follows from Theorem \ref{thm:convergence-dp}.
\end{proof}

\subsection{Mapping Convergence Results to ExpM+NF}
\label{sec:mapping-convergence-results}
The hypotheses for the convergence theorems must hold for all datasets $X\in D$ (not just the specific dataset in hand). 
Hence, performing diagnostics to estimate the accuracy of the NF, $q_n(X, t)$ in approximating ExpM, $p(X,t))$ for a specific  dataset $X$ is insufficient. 
The primary obstruction to employing the convergence theorems above for ExpM+NF is proving a bound on the proximity of $q_X, p_X$ independent of $X$. 

\subsubsection{An Illustrative Example}\label{sec:example}
\begin{figure}
    \centering
    \includegraphics[width = 0.235\textwidth]{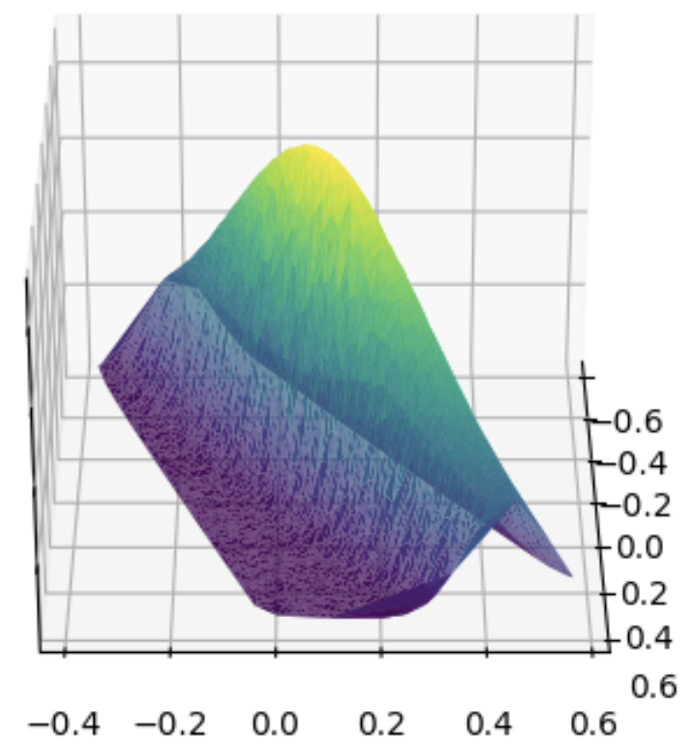}
    \includegraphics[width = 0.235\textwidth]{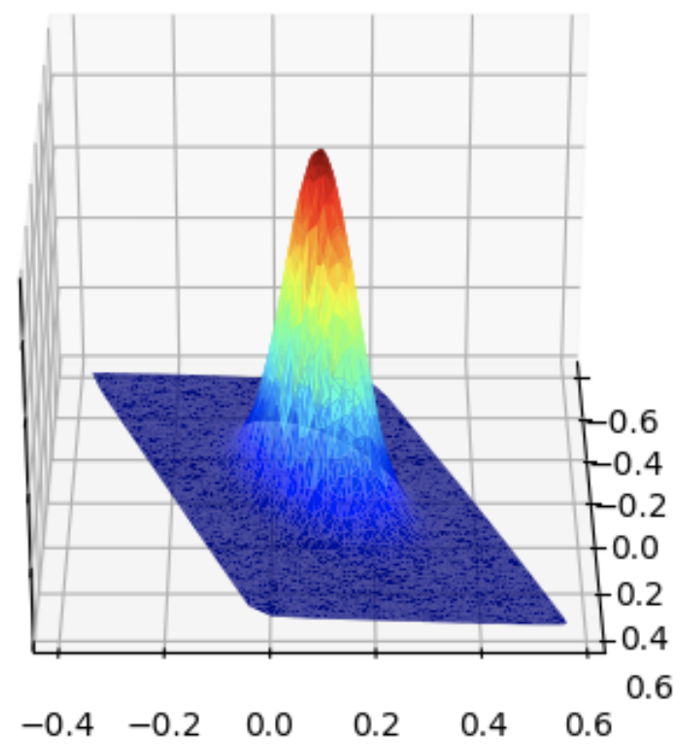}
    \caption{\textbf{Left density} is the target ExpM density, $p_X$ for training a logistic regression classifier with two parameters. \textbf{Right density} is the NF approximation density, $q_X$ of $p_x$. Comparison shows the NF approximation is too precise (too peaked at the mode) to be a good approximation. Hence, sampling from the NF approximation provides accurate model parameters with high likelihood, but intuitively should have strictly less privacy than the target ExpM density, as it has too little variance. Compare with Figure \ref{fig:expm_vary_eps}.}
    \label{fig:counter-example}
\end{figure}
When using the ExpM+NF for training a model $M(t)$, previous research shows the model parameters  $t$  sampled from the NF output density $q$, provide very accurate models. 
This implies that the region near the mode(s) of the ExpM density, $p$, are preserved by the the NF approximation, $q$. 
(By design the mode of ExpM is the $\argmin_t L$, so sampling from the ExpM desity produces near optimal parameters with high likelihood).
Hence, if the NF approximation $q$ is not close to $p$, it is intuitively too precise (too much mass near the mode, too little toward the tail). 
Figure \ref{fig:counter-example} shows this is precisely the case for a specific example. 
To exhibit this example, a logistic regression classifier with two parameters $t = [t_1, t_2]^t$ is trained using ExpM+NF with utility function defined to be  the negative of $\ell(2)$ loss; hence, the ExpM density $p_X(t)$, a density over $\R^2$, is approximated via training a normalizing flow model to produce output density $q_X$. 
Both $p_X$ and $q_X$ are pictured in the figure.  
Comparing this to Figure \ref{fig:expm_vary_eps}, it seems that our NF output density is closer to the ExpM density with worse privacy (less variance). 
This counter example shows that in general we cannot expect $d_{DP}(q, p)\to 0$; hence to apply Theorem \ref{thm:convergence-dp}, a method of bounding  $d_{DP}(q, p)$ is needed. 

From an engineering perspective, this example suggests that regularization of the Reverse KL loss  is needed to dampen the over-precise fitting of the NF.

\subsubsection{Does a general converse to Theorem \ref{thm:convergence-dp} exist? (No)} \label{sec:converse}
If a general converse exists, then it would imply that the any density family $\{q_X\}_{X\in D}$ satisfying close to $\varepsilon$-DP must be uniformly close to the ExpM family $\{p_X\}$. 
Without more hypotheses, we can show such a converse false quite easily. 
Let $q_X$ be the standard normal distribution on $\R^n$ for all $X.$ Clearly $q_X/q_X' = 1$, so $q$ satisfies $0$-DP. 
But any ExpM $p$ satisfying nearly $0$-DP will have to be a uniform density, which is far from the Gaussian density. 
Mapping this back to our goal, without more hypotheses, the NF output density may indeed satisfy a strong privacy guarantee even if it fails to uniformly approximate the target ExpM density.

\begin{figure}
    \centering
    \includegraphics[width = 0.5\textwidth]{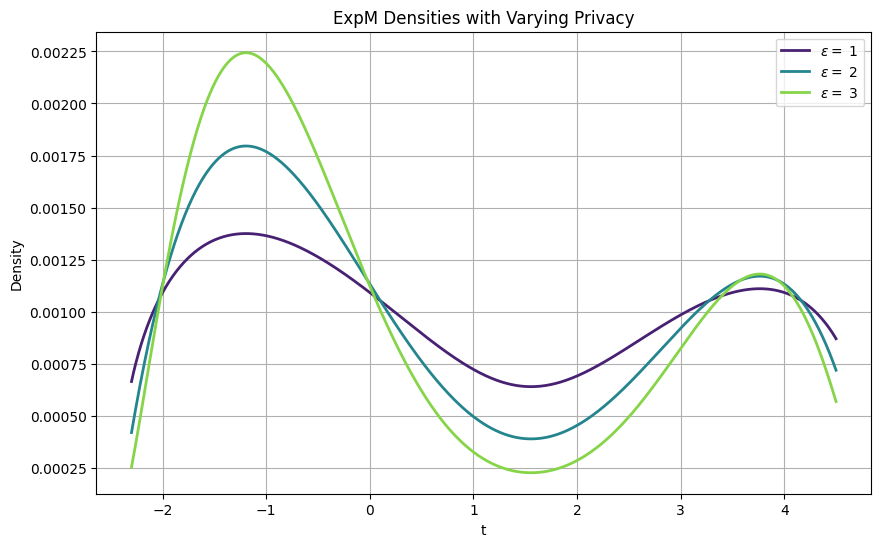}
    \caption{For a given utility function and data set, the variance ExpM density $p_X$ increases as privacy increases ($\varepsilon$ decreases).}
    \label{fig:expm_vary_eps}
\end{figure}

\subsection{DP Squeeze Theorem}\label{sec:squeeze-results}
In this section we let $\Delta(u)$ denote the sensitivity of a function $u(X,t): D\times \R^n\to \R$. 
We denote privacy mechanisms $X\mapsto f(X,t) = f_X(t): D\to  \P$. 

\begin{lemma}
\label{thm:u-dp}
    With $f$ any mechanism and $u = \log(f)$, $f$
satisfies $\Delta(u)$-DP, and no stronger privacy guarantee is possible.
\end{lemma}
\begin{proof}
For any neighbors $X,X'\in D$, $t\in \mathbb{R}^n$, 
$|\log(f_X/f_X')(t)| = |u(X,t)-u(X',t)|\leq\Delta(u).$ This shows $f$ satisfies $\Delta(u)$-DP. 
For any $\eta>0$, there is some neighboring $X,X'\in D$ and $t\in \R^n$ such that $|\log(f_X/f_X')(t)| = |u(X,t)-u(X',t)|\geq \Delta(u) - \eta$. This shows no stronger privacy guarantee is possible. 
\end{proof}

\begin{theorem}[Privacy Squeeze Theorem]
Let $a,b,c$ be mechanisms, and suppose $c$ is squeezed between $a,b$, i.e., $\min(a(X,t),b(X,t)) \leq c(X,t) \leq \max(a(X,t), b(X,t))$. 
Suppose $a, b$ satisfy $\varepsilon_a$--, $\varepsilon_b$--DP, respectively. 
Then $c$ is $\varepsilon_c$--DP with $\varepsilon_c  = \max(\varepsilon_a, \varepsilon_b) + \sup_{D\times \R^n} |\log(a/b)|$.
\end{theorem}
\begin{proof}
    Write $u = \log(a),v = \log(b), w = \log(c).$ 
    Since log is increasing, we have $\min\{u,v\} \leq w \leq \max\{u,v\}$.
    For neighboring $X,X'\in D$, we obtain (colors to help reader follow terms)
    \begin{equation*}
    \begin{array}{llll}
        |w(X,t) - w(X',t)| &\leq \max \{ & |u(X,t) - u(X',t)|,   \textcolor{teal}{|v(X,t) - v(X',t)|}\\
        &&\textcolor{purple}{|u(X,t) - v(X',t)}|, 
             \textcolor{violet}{|v(X,t) - u(X',t)|}  
                \} \\
            & = \max\{ & |u(X,t) - u(X',t)|,    \textcolor{teal}{ |v(X,t) - v(X',t)|},  \\  
             && \textcolor{purple}{|u(X,t) - u(X',t)  + u(X',t) - v(X',t)|}, \\
            & & 
                 \textcolor{violet}{|v(X,t)-v(X',t) + v(X',t) - u(X',t)|}\}\\
            & \leq \max\{&  \Delta(u), \Delta(v) \} + |u(X',t) - v(X',t)|
    \end{array}
    \end{equation*}
    where $\Delta(u), \Delta(v)$ are the sensitivities. 
Similarly, by using 
\textcolor{purple}{$ -v(X,t) + v(X,t) $} 
and 
\textcolor{violet}{$-u(X,t) + u(X,t)$} 
in the middle equality we obtain 
$|w(X,t) - w(X',t)|\leq  \max\{  \Delta(u), \Delta(v) \} + |u(X,t) - v(X,t)|$.  
We have $|w(X,t) - w(X',t)|\leq  \max\{  \Delta(u), \Delta(v) \} + \min_{A\in \{X,X'\}} |u_A(t) - v_A(t)|$. 
Finally, taking the supremum over neighboring $X,X'$ and all $t$ gives 
$$\Delta(w) \leq  \max( \Delta(u), \Delta(v) ) + \sup_{D\times \R^n} |u - v|\ .$$    
    It follows from Theorem \ref{thm:u-dp} that since $c = e^w$, $c$ satisfies $\varepsilon_c$--DP with $\varepsilon_c = \Delta(w) \leq \max(\varepsilon_a, \varepsilon_b) + \sup_{D\times \R^n} |u - v|.$
\end{proof}

\begin{theorem}[Projection Theorem]
\label{thm:projection}
    Suppose $a, b$ satisfy $\varepsilon_a, \varepsilon_b$-DP, respectively and $c = \alpha a + (1-\alpha)b$, with $\alpha \in [0,1]$. Then $c$ satisfies ($\max\{\varepsilon_a, {\varepsilon_b}\}$)-DP. 
\end{theorem}
\begin{proof}
    For any neighboring $X,X'\in D$ we have 
    \begin{align*}
    c_X &= \alpha a_X + (1-\alpha)b_X \\
    & = \alpha a_X' \frac{a_X}{a_X'} + (1-\alpha)b_X' \frac{b_X}{b_X'} \\
    & \leq \alpha a_X' e^{\varepsilon_a} + (1-\alpha) b_X'e^{\varepsilon_b}\\
    & \leq \max\{e^{\varepsilon_a}, e^{\varepsilon_b}\}c_X'
    \end{align*}
\end{proof}

\subsection{Mapping Squeeze Results to ExpM+NF}
\label{sec:mapping-squeeze-results}
Shown in Figure \ref{fig:expm_vary_eps}, as the $\varepsilon$ parameter increases (privacy decreases), the ExpM density $p$ moves from nearly a uniform density to a density with mass piled at the mode(s). 
In particular, the density corresponding to intermediate values of $\varepsilon$ are squeezed between higher and lower privacy ExpM densities, except for on a small set. 

This provides a new avenue for augmenting the ExpM+NF process to ensure a privacy bound on $q$, the NF output density, namely constraining the NF to lie between two ExpM densities, with different $\varepsilon$ values.
(Here we use the subscripts $_t$ and $_z$ to denote the random variable of the density, letting $p_t^{\varepsilon_i}$ denote the ExpM target density with privacy $\varepsilon_i$,  $t = g(z, \beta_{NF})$ be our NF model, $q_t$  the output density of the NF, and $r_z$ the base density for the NF; i.e., $q_t(t) = r_z(g(z))|\det J_g(z)|$ where $g(z) = t$.) 
Fix $\varepsilon_2 > \varepsilon_1 > 0$. 
We propose to force $q_t$, to satisfy the constraint $\min (p^{\varepsilon_1}_{t} , p^{\varepsilon_2}_{t}) \leq q_t \leq \max (p^{\varepsilon_1}_{t} , p^{\varepsilon_2}_{t})$.
How one can achieve this constrained optimization both theoretically (i.e., with provable guarantees) and in practice is future research. 
There are two apparent obstructions:
First,  enforcing such a constraint as written involves computation of the unknown denominators of ExpM, $Z = \int e^{\varepsilon u(t,X) /(2s)}dt$.
Second, using simple regularization (or an iterative Lagrangian Relaxation optimization scheme) is not guaranteed to provide a feasible solution (a $q_X$ living within the constraint region) without further hypotheses. 
A potential route to a privacy proof is to enforce analogous constraints on the log densities via Lagrangian optimization (alternating optimization of NF parameters, then regularization parameters). This would require analysis of the unknown denominators for ExpM densities to prove it suffices. 

Similarly, if the NF output $q_X$ can be shown to be uniformly close to any element on the line segment  $[p_t^{\varepsilon_1},  p_t^{\varepsilon_2}]$, then using the Projection Theorem \ref{thm:projection} with Theorem \ref{thm:convergence-dp} or \ref{thm:uniform-convergence} furnish the needed guarantee. 

\section{Conclusion \& Future Directions}
\label{sec:conclusion}
The fundamental observation driving this new direction is that differentially private ML is constrained by an accuracy-privacy tradeoff that is often unusable in practice. 
Examining DPSGD, we find that the algorithm is wasteful by design in all but the singular scenario when gradients are to be made public. 
Our critical insight that the well-established but often intractable ExpM is a tailored optimization solution, and we now have new sampling techniques potentially allowing its use. 
By design ExpM circumvents the step--wise decline in privacy inherent to DPSGD, so research for unlocking ExpM provides a key to advancing the field. 
To this end, we research use of NFs to sample from the intractable ExpM density. 
Perhaps unexpectedly, we exhibit that a single sample from an NF trained to approximate the ExpM density is nearly as accurate as non-private SGD, and more so than DPSGD. 
This advancement shows that pairing ExpM with a modern sampling method is a viable solution in terms of computational expense and accuracy. 
Such a method could address many problems in practice, e.g., a decreased privacy cost will afford needed hyperparameter training runs that ultimately will provide even larger steps in accuracy. 
Promises aside, exhibiting a privacy proof for ExpM+NF has thus far eluded us. 
Implementation of state--of--the--art privacy auditing methods provide some evidence that ExpM+NF adds privacy over non-private SGD, but not as much as DPSGD. 
En route to these results we exhibit ancillary benefits, including state--of--the--art privacy and accuracy results on MIMIC-III benchmark, bayesian learning feasibility of ExpM+NF, and many Theorems on proving privacy in the context of density learning.

At first glance, the accuracy results of ExpM+NF (Section \ref{sec:accuracy-results}) may seem a bit too good to be true. 
First recall that, as we have not proven privacy, $\varepsilon$ is simply a parameter that governs the variance of the target ExpM density. 
Second, refer back to the Illustrative Example section \ref{sec:example}. 
Our evidence is that the ExpM+NF output density is very precise---it puts of most its mass near the mode of the true ExpM density. 
This allows for a single sample to give very--close--to--optimal parameters, as seen in our results,  even for very small $\varepsilon$)!
How much privacy is provided by ExpM+NF is unknown. 
Indeed, lower variance in the density, as we are led to believe is true of the NF output density, means worse privacy for the ExpM density. 
On the other hand, as discussed in Section \ref{sec:converse} such an NF density may indeed exhibit strong privacy, as its privacy is a function of the NF sensitivity to any one  training sample. 
Finally, our empirical privacy results, at least for LR, imply that ExpM+NF is more private than non-private SGD, but not so much as DPSGD. 
The question of what is the actual privacy for the parameter $\varepsilon$ persists. 
Answering this paired with the results in Section \ref{sec:accuracy-results} will allow a direct comparison of DPSGD's and ExpM+NF's privacy--accuracy curves.

As discussed in the Limitations Section \ref{sec:limitations}, the primary need for a privacy proof of ExpM+NF is a theorem that guarantees, independent of the training dataset, proximity of the NF output density to a mechanism with known privacy. 
(Specific avenues for such a proof are the topic of Section \ref{sec:math}). 
Of course, there are many avenues for tweaking the proposed ExpM+NF method to pursue this result. 
For instance, the type of NF used may be adapted to admit easier analysis. 
Note that this is wider than simply defining each flow layer as we did; novel approaches to implementation, such as Neural ODEs \cite{chen2018neural} bring to bear whole mathematical tool suites for alternative avenues of proof. 
Similarly, many new---and perhaps just forgotten---sampling techniques (e.g., \cite{neal2001annealed, restrepo2021homotopy}) that admit an error bound  may pose a solution in lieu of NFs.

Our hope is that the steps forward presented here, as well as documentation of our obstructions are fodder for the research community to successfully advance the state of the art. 

\bibliography{refs}

\begin{thebibliography}{10}

\bibitem{abadi2016deep}
M.~Abadi et~al.
\newblock Deep learning with differential privacy.
\newblock In {\em Proceedings of the 2016 ACM SIGSAC conference on computer and
  communications security}, 2016.

\bibitem{appleDP}
{Apple Differential Privacy Team}.
\newblock Learning with privacy at scale, 2017.
\newblock Accessed: 2023-07-25.

\bibitem{carlini2022membership}
N.~Carlini, S.~Chien, M.~Nasr, S.~Song, A.~Terzis, and F.~Tramèr.
\newblock Membership inference attacks from first principles.
\newblock In {\em 2022 IEEE Symposium on Security and Privacy (SP)}, pages
  1897--1914, 2022.

\bibitem{che2018recurrent}
Z.~Che et~al.
\newblock Recurrent neural networks for multivariate time series with missing
  values.
\newblock {\em Scientific reports}, 8(1):1--12, 2018.

\bibitem{chen2018neural}
R.~T. Chen, Y.~Rubanova, J.~Bettencourt, and D.~K. Duvenaud.
\newblock Neural ordinary differential equations.
\newblock {\em Advances in neural information processing systems}, 31, 2018.

\bibitem{dajani2017modernization}
A.~N. Dajani et~al.
\newblock The modernization of statistical disclosure limitation at the u.s.
  census bureau.
\newblock {\em Presented at the September 2017 meeting of the Census Scientific
  Advisory Committee}, 2017.

\bibitem{ding2017collecting}
B.~Ding, J.~Kulkarni, and S.~Yekhanin.
\newblock Collecting telemetry data privately.
\newblock In {\em Advances in Neural Information Processing Systems 30, NIPS
  ’17}, pages 3571--3580, 2017.

\bibitem{dong2021gaussian}
J.~Dong, A.~Roth, and W.~Su.
\newblock Gaussian differential privacy.
\newblock {\em Journal of the Royal Statistical Society}, 2021.

\bibitem{dwork2007sequential}
C.~Dwork.
\newblock Sequential composition of differential privacy.
\newblock {\em Theory of Cryptography}, pages 222--232, 2007.

\bibitem{dwork2010advanced}
C.~Dwork.
\newblock Advanced composition of differential privacy.
\newblock In {\em Proceedings of the 41st Annual ACM Symposium on Theory of
  Computing}, pages 436--445, 2010.

\bibitem{dwork2014algorithmic}
C.~Dwork and A.~Roth.
\newblock The algorithmic foundations of differential privacy.
\newblock {\em Found. Trends Theor. Comput. Sci.}, 9(3-4):211--407, 2014.

\bibitem{erlingsson2014rappor}
U.~Erlingsson, V.~Pihur, and A.~Korolova.
\newblock Rappor: Randomized aggregatable privacy-preserving ordinal response.
\newblock In {\em Proceedings of the 2014 ACM Conference on Computer and
  Communications Security}, pages 1054--1067, 2014.

\bibitem{friedman2010data}
A.~Friedman and A.~Schuster.
\newblock Data mining with differential privacy.
\newblock In {\em Proceedings of the {16th ACM SIGKDD} international conference
  on Knowledge discovery and data mining}, pages 493--502, 2010.

\bibitem{gabrie2021efficient}
M.~Gabri{\'e}, G.~M. Rotskoff, and E.~Vanden-Eijnden.
\newblock Efficient bayesian sampling using normalizing flows to assist markov
  chain monte carlo methods.
\newblock In {\em ICML Workshop on Invertible Neural Networks, Normalizing
  Flows, and Explicit Likelihood Models}, 2021.

\bibitem{gopi2022private}
S.~Gopi, Y.~T. Lee, and D.~Liu.
\newblock Private convex optimization via exponential mechanism.
\newblock In {\em Conference on Learning Theory}, pages 1948--1989. PMLR, 2022.

\bibitem{gopi2021numerical}
S.~Gopi, Y.~T. Lee, and L.~Wutschitz.
\newblock Numerical composition of differential privacy.
\newblock In M.~Ranzato, A.~Beygelzimer, Y.~Dauphin, P.~Liang, and J.~W.
  Vaughan, editors, {\em Advances in Neural Information Processing Systems},
  volume~34, pages 11631--11642. Curran Associates, Inc., 2021.

\bibitem{ji2014differential}
Z.~Ji, Z.~C. Lipton, and C.~Elkan.
\newblock Differential privacy and machine learning: a survey and review.
\newblock {\em arXiv preprint arXiv:1412.7584}, 2014.

\bibitem{johnson2016mimic}
A.~E.~W. Johnson et~al.
\newblock {MIMIC-III}, a freely accessible critical care database.
\newblock {\em Scientific data}, 3, 2016.

\bibitem{kairouz2015composition}
P.~Kairouz, S.~Oh, and P.~Viswanath.
\newblock The composition theorem for differential privacy.
\newblock {\em IEEE Transactions on Information Theory}, 63(6):4037--4049,
  2017.

\bibitem{kapralov2013differentially}
M.~Kapralov and K.~Talwar.
\newblock On differentially private low rank approximation.
\newblock In {\em Proceedings of the twenty-fourth annual ACM-SIAM symposium on
  Discrete algorithms}, pages 1395--1414. SIAM, 2013.

\bibitem{kasiviswanathan2011what}
S.~P. Kasiviswanathan, H.~K. Lee, K.~Nissim, S.~Raskhodnikova, and A.~Smith.
\newblock What can we learn privately?
\newblock {\em SIAM Journal on Computing}, 40(3):793--826, 2011.

\bibitem{googleblog2022}
A.~Kurakin.
\newblock Applying differential privacy to large scale image classification,
  2022.
\newblock Accessed: 2023-07-25.

\bibitem{lin2024tractable}
Y.~Lin, Y.-A. Ma, Y.-X. Wang, R.~E. Redberg, and Z.~Bu.
\newblock Tractable mcmc for private learning with pure and gaussian
  differential privacy.
\newblock In {\em Proceedings of the International Conference on Learning
  Representations (ICLR)}, 2024.

\bibitem{mcsherry2007mechanism}
F.~McSherry and K.~Talwar.
\newblock Mechanism design via differential privacy.
\newblock In {\em IEEE Symposium on Foundations of Computer Science}, pages
  94--103, 2007.

\bibitem{minami2016differential}
K.~Minami, H.~Arai, I.~Sato, and H.~Nakagawa.
\newblock Differential privacy without sensitivity.
\newblock In {\em Advances in Neural Information Processing Systems 29}, 2016.

\bibitem{mironov2017renyi}
I.~Mironov.
\newblock R{\'e}nyi differential privacy.
\newblock In {\em 2017 IEEE 30th computer security foundations symposium
  (CSF)}, pages 263--275. IEEE, 2017.

\bibitem{neal2001annealed}
R.~M. Neal.
\newblock Annealed importance sampling.
\newblock {\em Statistics and computing}, 11:125--139, 2001.

\bibitem{near2022programming}
J.~P. Near and C.~Abuah.
\newblock Programming differential privacy, 2022.
\newblock Accessed: 2023-09-15.

\bibitem{papamakarios2021normalizing}
G.~Papamakarios et~al.
\newblock Normalizing flows for probabilistic modeling and inference.
\newblock {\em Journal of Machine Learning Research}, 22(57):1--64, 2021.

\bibitem{scikit-learn}
F.~Pedregosa, G.~Varoquaux, A.~Gramfort, V.~Michel, B.~Thirion, O.~Grisel,
  M.~Blondel, P.~Prettenhofer, R.~Weiss, V.~Dubourg, J.~Vanderplas, A.~Passos,
  D.~Cournapeau, M.~Brucher, M.~Perrot, and E.~Duchesnay.
\newblock Scikit-learn: Machine learning in {P}ython.
\newblock {\em Journal of Machine Learning Research}, 12:2825--2830, 2011.

\bibitem{ponomareva2023how}
N.~Ponomareva, H.~Hazimeh, A.~Kurakin, Z.~Xu, C.~Denison, H.~B. McMahan,
  S.~Vassilvitskii, S.~Chien, and A.~G. Thakurta.
\newblock How to {dp-fy ML:} a practical guide to machine learning with
  differential privacy.
\newblock volume~77, pages 1113--1201, 2023.

\bibitem{restrepo2021homotopy}
J.~M. Restrepo and J.~M. Ramirez.
\newblock Homotopy sampling, with an application to particle filters.
\newblock {\em arXiv preprint arXiv:2105.01576}, 2021.

\bibitem{rezende2015variational}
D.~Rezende and S.~Mohamed.
\newblock Variational inference with normalizing flows.
\newblock In {\em International Conference on Machine Learning}, 2015.

\bibitem{steinke2024privacy}
T.~Steinke, M.~Nasr, and M.~Jagielski.
\newblock Privacy auditing with one (1) training run.
\newblock {\em Advances in Neural Information Processing Systems}, 36, 2024.

\bibitem{sun1995basis}
X.~Sun and C.~Bischof.
\newblock A basis-kernel representation of orthogonal matrices.
\newblock {\em SIAM journal on matrix analysis and applications},
  16(4):1184--1196, 1995.

\bibitem{suriyakumar2021chasing}
V.~M. Suriyakumar, N.~Papernot, A.~Goldenberg, and M.~Ghassemi.
\newblock Chasing your long tails: Differentially private prediction in health
  care settings.
\newblock In {\em Proceedings of the 2021 {ACM} Conference on Fairness,
  Accountability, and Transparency}, pages 723--734, 2021.

\bibitem{berg2018sylvester}
R.~Van Den~Berg et~al.
\newblock Sylvester normalizing flows for variational inference.
\newblock In {\em 34th Conference on Uncertainty in Artificial Intelligence},
  2018.

\bibitem{vinterbo2012differentially}
S.~A. Vinterbo.
\newblock Differentially private projected histograms: Construction and use for
  prediction.
\newblock In {\em Joint European Conference on Machine Learning and Knowledge
  Discovery in Databases}, pages 19--34. Springer, 2012.

\bibitem{mimic-extract}
S.~Wang et~al.
\newblock {MIMIC}-extract github repository.
\newblock Accessed: 2023-05-25.

\bibitem{wang2020mimic}
S.~Wang et~al.
\newblock Mimic-extract: A data extraction, preprocessing, and representation
  pipeline for mimic-iii.
\newblock In {\em Proceedings of the ACM conference on health, inference, and
  learning}, 2020.

\bibitem{xiong2014survey}
P.~Xiong, T.-Q. Zhu, and X.-F. Wang.
\newblock A survey on differential privacy and applications.
\newblock {\em Jisuanji Xuebao/Chinese Journal of Computers}, 37(1):101--122,
  2014.

\bibitem{pytorchOpacus}
A.~Yousefpour et~al.
\newblock Codebase {Opacus}.
\newblock \url{https://opacus.ai/}.
\newblock Accessed: 2023-07-25.

\bibitem{yousefpour2021opacus}
A.~Yousefpour, I.~Shilov, A.~Sablayrolles, et~al.
\newblock Opacus: User-friendly differential privacy library in {PyTorch}.
\newblock In {\em {NeurIPS} Workshop Privacy in Machine Learning}, 2021.

\end{thebibliography}
\bibliographystyle{abbrv}

%%%%%%%%%%%%%%%%%%%%%%%%%%%%%%%%%%%%%%%%%%%%%%%%%%%%%%%%%%%%

\appendix
\sloppy 

\section{Further Background on Differential Privacy}
\label{sec:appendix-background}
Intuitively, differential privacy measures how much an individual data point changes the output distribution of the mechanism $f$. 
If the mechanism's output is nearly identical in both cases (when any data point is or isn't included), meaning that $\varepsilon$ is very small, then this is a privacy guarantee for any individual data point. 
This quantifies the intuitive notion that it is essentially indistinguishable if any one data point is included or not. 

Differential privacy has many useful properties. 
First, post-processing is ``free'', meaning if $f(\theta, X)$ is $(\varepsilon, \delta)$-DP, the same guarantee holds after composition with any function $h$; i.e.,  $h\circ f$ is also $(\varepsilon, \delta)$-DP \cite{dwork2014algorithmic}. 
Secondly, it satisfies composition theorems; the most basic of which says that if $f_1$ and $f_2$ satisfy $(\varepsilon_1, \delta_1)$-DP and $(\varepsilon_2, \delta_2)$-DP, respectively, then $f_1 \circ f_2$ satisfies $(\varepsilon_1 + \varepsilon_2, \delta_1+\delta_2)$-DP \cite{dwork2014algorithmic}. 
The general problem of estimating the overall privacy bound (a single $\varepsilon, \delta)$) for a sequence of mechanisms $f_i(\theta_i, X), i = 1, ..., m$ each furnished with a privacy guarantee $(\varepsilon_i, \delta_i)$-DP, is important for use. 

As the basic composition theorem above holds, it is not a tight bound (better estimates may exist), and hence, a large body of work \cite{dwork2007sequential, dwork2010advanced, kairouz2015composition, abadi2016deep, dong2021gaussian, gopi2021numerical} provide advanced composition theorems (tradeoffs of $\varepsilon$ and $\delta$ that depend on the $\varepsilon_i, \delta_i$) that estimate this overall bound.
Intuitively, as more information is released (as $m$ grows), the overall privacy guarantee gets worse ($\varepsilon$ and $\delta$ increase), and all advanced composition theorems entail an increase of $\delta$ for the better estimate of $\varepsilon$. 

\section{Further Background on Normalizing Flows}
\label{sec:appendix-nf}
Let $z\in \mathbb{R}^d,\  A\in \mathbb{R}^{d\times m},\ B \in \mathbb{R}^{m\times d},\ c \in \mathbb{R}^m$, and $h:\mathbb{R}\to\mathbb{R}$ an activation function that we ambiguously apply to vectors componentwise ($h(\vec{v}) = [h(v_1), ..., h(v_d)]^t$).   
\begin{definition}[Sylvester Flow]
\label{defn:sylvester}
A Sylvester flow is defined as $g(z) = z + A h(Bz + c)$, and note that %$J_f(z) = I_d + A_{d \times m} \text{diag}(h'(Bz+c))_{m \times m} B_{m \times d}.$ 
$J_g(z) = I_d + A_{d \times m} \text{diag}(h'(Bz+c))_{m \times m} B_{m \times d}.$ 
\end{definition}
\begin{theorem}[Sylvester a.k.a. Weinstein–Aronszajn Theorem]
\label{thm:wa}
$ \det (I_d + A_{d \times m} B_{m \times d}) = \det (I_m + B_{m \times d} A_{d \times m})$. 
\end{theorem}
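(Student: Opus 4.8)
The plan is to prove this classical determinant identity by embedding $A$ and $B$ into a single $(d+m)\times(d+m)$ block matrix and evaluating its determinant in two ways. Set
\[
M \;=\; \begin{pmatrix} I_d & A \\ -B & I_m \end{pmatrix}.
\]
First I would exhibit the block-triangular factorization
\[
M \;=\; \begin{pmatrix} I_d & 0 \\ -B & I_m \end{pmatrix}\begin{pmatrix} I_d & A \\ 0 & I_m + BA \end{pmatrix},
\]
which one verifies by multiplying the blocks (the bottom-right entry is $-BA + (I_m+BA) = I_m$). Since the determinant of a block-triangular matrix is the product of the determinants of its diagonal blocks, the two factors have determinants $1$ and $\det(I_m+BA)$, so $\det M = \det(I_m + BA)$.

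Next I would exhibit the complementary factorization
\[
M \;=\; \begin{pmatrix} I_d + AB & A \\ 0 & I_m \end{pmatrix}\begin{pmatrix} I_d & 0 \\ -B & I_m \end{pmatrix},
\]
again verified by block multiplication (the top-left entry is $(I_d+AB) - AB = I_d$). The same rule gives $\det M = \det(I_d + AB)$. Equating the two evaluations of $\det M$ yields $\det(I_d + A_{d\times m} B_{m\times d}) = \det(I_m + B_{m\times d} A_{d\times m})$, which is the claim.

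There is essentially no deep obstacle here: the argument is elementary linear algebra, and the only thing requiring care is bookkeeping — confirming that the two block factorizations genuinely multiply back to $M$ and that the dimensions $d$ and $m$ are assigned to the correct factors. An alternative route I would mention as a remark is spectral: $AB$ and $BA$ share the same nonzero eigenvalues with the same algebraic multiplicities over $\mathbb{C}$, so their characteristic polynomials differ only by a power of $\lambda$, and the surplus eigenvalues (all equal to $0$) contribute factors $(1+0)=1$ to $\prod_i(1+\lambda_i)$; this is slightly more delicate because of the multiplicity step, so the block-matrix proof is preferable. Finally I would note the intended use in this subsection: applied with $B$ replaced by $\mathrm{diag}(h'(Bz+c))\,B$, the theorem reduces the computation of $\det J_g$ for a Sylvester flow (Definition \ref{defn:sylvester}) from a $d\times d$ determinant to an $m\times m$ one.
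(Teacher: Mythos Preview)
Your proof is correct: both block factorizations multiply out to $M$ as claimed, and the block-triangular determinant rule then gives the identity immediately. This is in fact the standard textbook proof of Sylvester's determinant theorem.

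Note, however, that the paper does not supply its own proof of Theorem~\ref{thm:wa}. It is stated as a named classical result (Sylvester, a.k.a.\ Weinstein--Aronszajn) and used without argument, with the immediate takeaway being the sentence that follows it: when $m<d$, the Jacobian determinant of a Sylvester flow can be computed as an $m\times m$ determinant rather than a $d\times d$ one. So there is no ``paper's approach'' to compare against here---your block-matrix argument simply fills in a proof the authors chose to omit, and your closing remark about the application to $\det J_g$ is exactly the point the paper is making.
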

It follows that when $m < d$, we can compute the determinant, $J_g(z)$ %$J_f(z)$  
in the lower dimensional representation!
\begin{definition}[Planar Flow]
\label{defn:planar}
A planar flow is a special case of a Sylvester flow when $m=1$, so $A, B\in \mathbb{R}^d$. 
\end{definition}

Following Berg et al. \cite{berg2018sylvester}, a Sylvester Flow can be implemented by using the special case where 
$A = Q_{d\times d} R_{d\times m}$ and $B = \hat{R}_{m\times d}Q^t_{d\times d}$ with $Q$ orthogonal and $R, \hat{R}$ upper triangular. 
(This is a restriction in that it uses the same $Q$ for both $A$, $B$). 
% Note that any matrix can be written as a QR decomposition, so the restriction here is simply that it is the same Q defining by A and B. 
To instantiate an orthogonal matrix $Q$, we define unit vectors $v$ and set $Q:= \prod_v I-2vv^t$ which is a product of Householder reflections. A well know fact is that Householder reflections are orthogonal, and any orthogonal matrix is as a product of Householder reflections \cite{sun1995basis}.
Note this is obviated in the Planar Flow case since $A, B$ are simply $d$--dimensional vectors. 

\section{Further Details of ExpM+NF}\label{sec:appendix-expm}
Our utility must be $u=\phi \circ L$, a decreasing function of the loss. 
While any strictly decreasing $\phi$ theoretically ensures identical optimization, in practice $\phi$ must separate the near-optimal $\theta$ from those that are far from optimal. 
This means the derivative must be a sufficiently large negative value over an appropriate range of $L(\theta)$ to discriminate high versus low utility $\theta$. 
In particular, using a negative sigmoid function, which is flat ($\phi' \approxeq 0$) except in a limited neighborhood, is unwise, unless we know which range of $L(\theta)$ will contain the optimal values a priori and shift $\phi$ to discriminate these values. 
Lastly, we note that choice of $\phi$ effects the sensitivity, and therefore may be designed to assist in producing the required sensitivity bound.

When $s$ is large, it diminishes influence of $u$ on $p_{ExpM}$ and increasing the variance. This leads to higher likelihood of sampling less optimal $\theta$. 
Since $u, s$ are intimately tied to the loss function $L,$ bounding $s$ is generally required per loss function and can depend on the model outputs.
Hence, if for a particular model and loss function a strong bound can be made, this can give ExpM+NF a boost, and conversely, if only a weak or no sensitivity bound can be identified, ExpM+NF will be hindered or useless. 

% \section{Proof of Theorem \ref{thm:sensitivity}}\label{sec:appendix-s-proof}
% \begin{proof} 
% Define the utility to be the negative loss so $u = -L = \sum_{(x,y)\in (X,Y)} l(x,y,\theta)+ r(\theta)$  where $l(x,y,\theta)$ is the loss from data point $(x,y)$, and $r(\theta)$ is a regularization function.
% For two neighboring datasets $(X,Y), (X', Y')$, let $(x,y)\in (X,Y)$ and $(x',y') \in (X', Y')$ denote the lone row of values that are not equal. %(dropping the index $i$). 
% We obtain
% $|u( X, Y, \theta) -u(X', Y', \theta)| = |l(x,y,\theta)-l(x',y',\theta)|$ (all terms of $L$ cancel except that 
% of the differing data point); hence, bounding $s$ is achieved by Lipschitz or $\ell^\infty$ conditions on the per-data-point loss function $l$. 
% Regularization $r$ does not affect sensitivity. 

% Since $\hat{y}(x, \theta), y \in [0,1]$ we see 
% $(\hat{y}(x, \theta) - y)^2\in [0,1]$ and $(\hat{y}(x, \theta) - y')^2\in [0,1]$ so 
% $|u( X, Y, \theta) -u(X', Y', \theta)| = |(\hat{y}(x, \theta) - y)^2 - (\hat{y}(x', \theta) - y')^2|\leq 1$.
% \end{proof} 

\section{MIMIC-III ICU Benchmarks Experiment Details}
\label{sec:appendix-experiment-details}

%Lastly, our method has hyperparmeters for training that include: 
%epochs, NF batch size (how many model parameters to sample from the NF for each KL loss computation), data batch size (number of training data points used in each KL loss computation), learning rate, momentum, and regularization parameter. 

For DPSGD we use PRV accounting we use clipping parameter, \texttt{max_grad_norm} = 1 \cite{pytorchOpacus, yousefpour2021opacus}, $\delta = 1\mathrm{e}{-5}$, roughly the inverse of the number of training data points, as are standard, and \texttt{delta_error} set to $\delta/1000$ (default).
We compute the \texttt{noise_multiplier} parameter, which governs the variance of the Gaussian noise added to each gradient, to achieve the target privacy $(\varepsilon, \delta)$ based on the number of gradient steps (\texttt{epochs} and \texttt{batch_size} parameters) and the gradient clipping parameter \texttt{max_grad_norm}. 
The PRV Accounting method introduces an error term for $\varepsilon$ that 
%Opacus takes an ``epsilon error'' parameter  bounding 
bounds the error in the estimate of the privacy guarantee. 
Notably, \texttt{eps_error =.01} is the default fidelity, and computations become too expensive or intractable for \texttt{eps_error <.0001}.
Hence, given target epsilon $\bar{\varepsilon}$, we set \texttt{eps_error} to be as close as possible to $\bar{\varepsilon} / 100$ within the range $[.0001, .01]$. 

Data is split into a train/development (dev)/test split that is stratified, preserving class bias in each partition. 
For each $\varepsilon$, we performed a randomized grid search of hyperparemeters fitting on the train set and testing on the dev set. 
The hyperparmeters were refined usually twice for a total of about 60-90 hyperparmeter train/dev runs; this same procedure is used for all training methods (ExpM+NF, DPSGD, non-private).
Only those hyperparameters producing the best results on the dev set are used in final results--fitting on training plus dev partitions and validating on the never-before-used test set. 
While in practice, hyperparameter searching will entail a loss of privacy, we ignore accounting for this as our goal is to  compare the best achievable accuracy and privacy of ExpM+NF, DPSGD, relative to the nonprivate baseline. 

For DPSGD, 
the usual hyperparameters (epochs, batch size, momentum, learning rate, and regularization parameter) function as usual. 
For the NF models, the number of flows and parameters of the Sylvester model (Sec. \ref{sec:nf}, \ref{sec:appendix-nf}) are also  hyperparameters. 
Lastly, our method has hyperparmeters for training that include: epochs, NF batch size (how many model parameters to sample from the NF for each KL loss computation), data batch size (number of training data points used in each KL loss computation), learning rate, momentum, and regularization parameter. 

While batch normalization is commonplace in non-private deep learning, it is not directly portable to the private setting as $z$-normalization learned in training must be shipped with the model \cite{yousefpour2021opacus}; 
hence, we do not use batch normalization for our private models, but we do provide results with vs. without batch normalization where applicable, namely for the non-private GRU-D model, Table \ref{tab:nonprivate-bn}. 
%(Note batch normalization is not applicable to logistic regression). 
Details of the hyperparmeters used for each model can be found in our code base and vary per experiment and privacy level $\varepsilon$.

To address anomalous starting conditions, for both DPSGD and non-private training with BCE and $\ell^2$ loss, once hyperparameter searching is complete, we train ten non-private baseline models, each with a different random seed, and present the median AUC. 
The highest non-private median across the two loss functions is the considered the baseline for each dataset, task and appears as a dotted line in result plots. 
For ExpM+NF  we train ten NFs using the best hyperparameters and produce 1,000 sampled parameter sets from each NF, for a total of 10,000 parameter sets.  
We instantiate the 10,000 resulting models, compute their AUC results, and compare the median to DPSGD and the non-private baseline.
One may prefer to compute the median of the 1,000 AUCs per NF model (producing 10 median AUCs), and report the median of these medians. 
We have empirically verified that the results are nearly identical in our experiments. 

\subsection{MIMIC-III Dataset Details}
\label{sec:appendix-mimic}
MIMIC-III \cite{johnson2016mimic} includes records for a diverse group of critical care patients at a medical center from 2001–2012. 
Our experimental design follows a progression of research on MIMIC-III. 
Wang et al. \cite{wang2020mimic} create benchmarks of needed healthcare predictive tasks from the MIMIC-III dataset, downloadable (from MIMIC-III v1.4, as \texttt{all_hourly_data.h5}) with the codebase \cite{mimic-extract}. 
We use the binary prediction task benchmarks ICU Mortality and ICU Length of Stay ($>$ 3 days) from Wang et al.'s pipeline. 
These tasks differ only in target and share the feature set consisting of 104 measurement means per hour over a 24 hour period for 21,877 patients. 
Mortality is very imbalanced (07.4\%/92.6\%), while Length of Stay is balanced (47.1\%/52.9\%). 
We employ standardization ($z-$score normalization) for each feature, and it is applied independently per train/dev/test partition to prevent privacy issues.

As an example use of the ICU Mortality and Length of Stay benchmarks, Wang et al. also present results of non-private LR and GRU-D \cite{che2018recurrent} models.
Che et al. \cite{che2018recurrent} introduce the Gated Recurrent Unit with Decay (GRU-D) model, a deep neural network for time series learning tasks.  
GRU-D interpolates between the last seen value and the global mean so censored (missing) data does not need to be imputed in advance. 
Additionally, the hidden units of the GRU are decayed to zero with learned decay per component, which allows features that should have short influence (e.g., heart rate) vs. long influence (e.g., dosage/treatment) to have learned rate of diminishing value. 
Wang et al. provide non-private results for LR and GRU-D models on these two tasks as well as implementations. 

Suriyakumar et al. \cite{suriyakumar2021chasing} replicate Wang et al.'s results on these two tasks for non-private baselines, then provide DPSGD (with RDP accountant) results as well.  
To the best of our knowledge, Suriyakumar et al. \cite{suriyakumar2021chasing} present the state of the art in differentially private ML on these MIMIC-III benchmarks.
Unfortunately, their code is not provided. 

Using the currently available data (MIMIC-III v1.4, as \texttt{all_hourly_data.h5}), and following preprocessing code and descriptions of Wang et al. we replicate these benchmarks as closely as possible. 
%Further, we implement the LR and GRU-D models with non-private and DPSGD with PRV accounting training as to compare against ExpM+NF. 
PRV accounting \cite{gopi2021numerical} provides a better privacy bound than RDP, and we present comparative results across the two accounting methods. 
There are discrepancies between our dataset and Suriyakumar et al.'s description of the data, possibly because MIMIC-III was updated. 
The current MIMIC-III ICU Mortality and Length of Stay benchmarks have 89.1\% missingness rate and 104 features per hour $\times$ 24 hours giving 2,496 columns. 
Suriyakumar et al. claim to have 78\% missingness and only 78 features per hour, but they report the number of columns is reported as ``24,69'', seemingly a typo (if the correct is 2,496, would entail 104 feature per hour). 
The number of rows matches exactly. 
The LR models entails privately learning these 2,497 parameters. 
We use RMSProp optimizer with a scheduler that reduces the learning rate for all LR models.

For GRU-D, we follow Suriyakurmar et al. \cite{suriyakumar2021chasing} and when needed the original GRU-D work of Che et al. \cite{che2018recurrent}. 
Suriyakumar et al. \cite{suriyakumar2021chasing} used batch normalization (BN) even when training privately with DPSGD, but this has since been revealed as a source of privacy leakage  \cite{yousefpour2021opacus}. 
Hence, we do not use BN with our DPSDG implementations, but  we do train a non-private model both with and without BN applied to the top regressor layer.
We apply a dropout layer to the top regressor layer, treating the dropout probability as a hyperparameter. 
Unlike \cite{suriyakumar2021chasing}, we treat the hidden size as a hyperparameter for all experiments. 
The main motivation for this choice was that memory constraints on the GPU hardware used to perform ExpM+NF required that we used a hidden size no larger than 10.  For the non-private and DPSGD implementations, we allow the hidden size to vary between 10 and 100 for our randomized grid search, so this constraint was only limiting for our new method. 
For training non-privately and with DPSGD, we use the Adam optimization method and for the NF model we use RMSProp with a scheduler.
This is our the best possible replication of Wang et al. and Suriyakurmar et al.'s experiments save the choice to omit batch normalization. 

Notably, Che et al \cite{che2018recurrent} also provide non-private GRU-D results on MIMIC-III data for an ICU Mortality prediction task. Since Che et al. do not follow the preprocessing of Wang et al. and specifically claim to use 48 hours of data (while we, Wang, Suriyakurmar use 24 hours), we do not consider their results comparable.

We provide benchmark timing results for all three training methods on MIMIC experiments in section \ref{sec:times}.

\section{Further MIMIC-III Experiment Results}
\label{sec:appendix-more-results}
\subsection{NF Distributions \& Discussion}
Recall our method is to train an NF model to approximate the ExpM density, which is a density over $\theta$, the parameters of our desired classifier. 
Our results use privacy values for ExpM+NF assuming the NF output is the ExpM distribution. 
While we cannot visualize the NF's output distribution (because it is high dimensional), we can  investigate the distribution of AUCs attained by ExpM+NF models for each  $\varepsilon$ parameters. 
We present box plots which indicate the distribution's median (middle line), first (Q1) and third (Q3) quartiles (box), and extend 1.5 times the inner quartile range (Q1 - 1.5 IQR, Q3+ 1.5 IQR, whiskers). 
To see anomalous models arising from fortunate/unfortunate starting positions (different random seeds), we overlay each of the ten model's means. 
We also include visualizations of the distribution of the AUC values produced by individual NFs. 
See Fig. \ref{fig:distribtions_grud_icu}. Results are similar across LR and GRU-D, and across both MIMIC-III prediction tasks. 
Randomness due to the model initialization is exhibited in the variance of each $\varepsilon$'s bar chart. 
Noise in the hyperparameter search process may influences the distribution trend across $\varepsilon$ (as a different hyperparmeter search is performed for each $\varepsilon$). 
For this method specifically, outlier NFs can produce bad results based on their randomized initializations.  
The increase in accuracy and privacy by ExpM+NF also affords a straightforward workaround, e.g., releasing ten models, each sampled from trained NFs with different starting seeds that are then used in a voting scheme.

\begin{figure*}
    \centering
    \includegraphics[width=.53\linewidth]{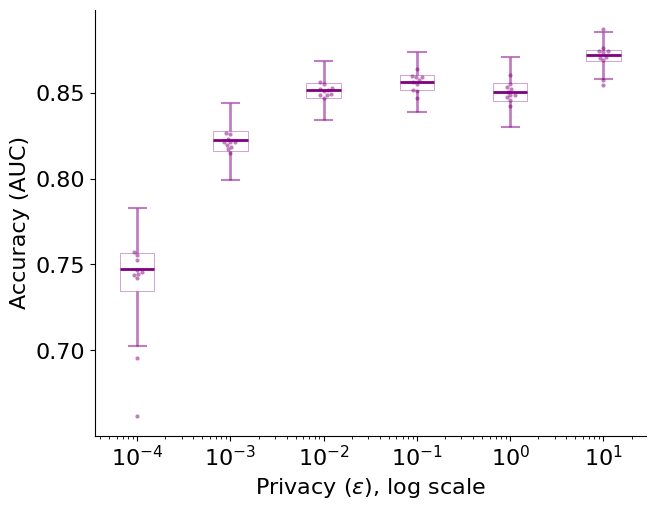}    \includegraphics[width=.45\linewidth]{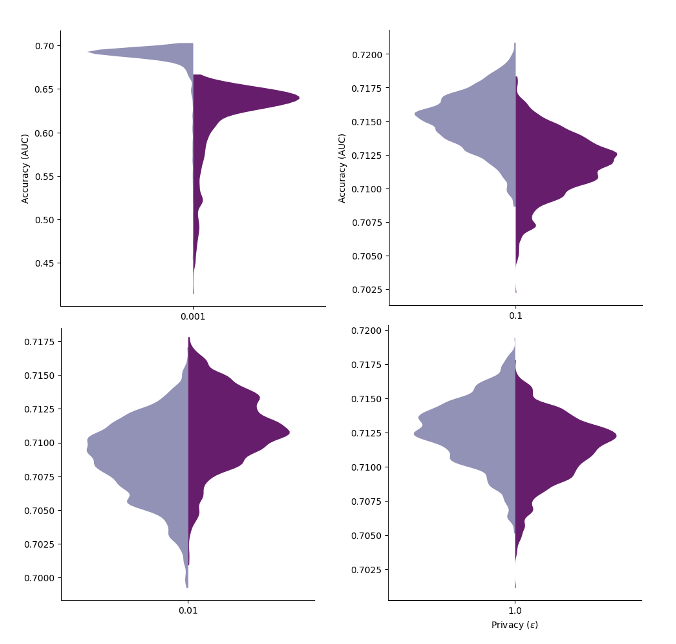}
    \caption{For GRU-D model on ICU Mortality task, for each $\varepsilon$ value, ten NF models are trained with identical hyperparmeters (but different starting seeds) and used to produce 1,000 samples each from the ExpM distribution. 
    \textbf{Left} presents box plots of the 10,000 resulting classifiers' AUC values. For each  NF model's 1,000 samples we compute the mean AUC, which are overlayed as dots. 
    \textbf{Right} presents the AUC distribution from two NFs for four $\varepsilon$ values.
    } 
    \label{fig:distribtions_grud_icu}
\end{figure*}

We found that $\sigma$, the hyperparameter governing the variance of the NF's base distribution, has a large influence on results. 
Optimal $\sigma$ found in hyperparameter searches tend to be larger for smaller $\varepsilon$, matching the intuition that the NF is more accurate if the base and output distributions' variance vary together.
% In future research, we suggest testing an NF that permits learning the base distribution's parameters during training. 

The effect of using Sylvester flows (over planar flows) is increased expressiveness, but it comes at the cost of \textit{many} more parameters in the NF model. 
For example, on the MIMIC-III ICU tasks we have dimension about $d= 2,500$. 
With planar flows, the number of NF parameters needed to train an LR model is  $(2d + 1) \times \text{number of flows}$, which is about 50,000 for 10 flows. 
With Sylvester flows the number of parameters is roughly $(d\times k + m^2 + m + d) \times \text{the number of flows}$, where $k \leq d-1$  is how many Householder reflections to use and $m \leq d$  (for planar flows, $m = 1$). If we use $m = k = 5$ and  10 flows we have about $150,000$ NF parameters. 
Moving to the GRU-D models, we increase $d$ to the $\mathcal{O}(50K)$, so the number of parameters explodes. 
Future research to understand the gain to accuracy-privacy vs the computational expense incurred by using more expressive NFs for ExpM+NF is needed.

\subsection{Non-Private \& DPSGD Advancements }
\label{sec:appendix-nonprivate-advancementa}
\label{sec:appendix-dpsgd-advancements}

Tables \ref{tab:nonprivate-lr} \& \ref{tab:nonprivate-grud} present non-private LR \& GRU-D results, respectively. 
We find show no significant difference between BCE and $\ell^2$ loss for  non-private and DPSGD-trained models.
(DPSGD BCE vs. $\ell^2$ results in Table \ref{tab:mimic_los_mort_results}).
Note that Wang et al.'s LR used SciKit Learn \cite{scikit-learn} while we use PyTorch. 

\begin{table*}[h]
\centering
\caption{AUC scores for non-private LR model results for this and prior works on MIMIC-III tasks. 
\textbf{Bold} results are best per task. \emph{Underlined} are the best result in this work. }
\label{tab:nonprivate-lr}
\begin{tabular}{l c c cc}
\toprule
 &  \cite{wang2020mimic} & \cite{suriyakumar2021chasing} & Our $\ell^2$  loss &  Our BCE loss \\
\cmidrule(lr){2-2} \cmidrule(lr){3-3}  \cmidrule(lr){4-4}\cmidrule(lr){5-5}
LR, ICU Mortality  &  \textbf{.887} & .82 & \emph{.839} & .83 \\
LR, Length of Stay &  \textbf{.716} & .69 & .675 & \emph{.681} \\
\bottomrule
\end{tabular}
\vspace{-.3cm}
\end{table*}

 \begin{table*}[h]
\centering
\caption{AUC scores for non-private GRU-D model from this and prior works on MIMIC-III tasks. \textbf{Bold} results are best per task. \emph{Underlined} results are best in this work per task. BN denotes batch normalization.}
\label{tab:nonprivate-grud}\label{tab:nonprivate-bn}
\begin{tabular}{l cc cc cc}
\toprule
    & \multirow{2}{*}{\cite{wang2020mimic}}& \multirow{2}{*}{\cite{suriyakumar2021chasing}}  &  \multicolumn{2}{c}{Our $\ell^2$ loss} & \multicolumn{2}{c}{Our BCE loss} \\

&                 &                                & BN & No BN & BN & No BN \\
       \cmidrule(lr){2-2} \cmidrule(lr){3-3}  \cmidrule(lr){4-5}\cmidrule(lr){6-7}
       \cmidrule(lr){4-5}
GRU-D, ICU Mortality  & \textbf{0.891} & 0.79 & 0.855 & 0.875 & 0.877 & \emph{0.878} \\
GRU-D, Length of Stay & \textbf{0.733}  & 0.67  &  0.721 & \emph{0.724} & 0.722 & 0.722\\
\bottomrule
\end{tabular}
\vspace{-.3cm}
\end{table*}

Table \ref{tab:nonprivate-bn} compares results of the GRU-D non-private baseline for previous works and ours, specifically including BCE vs $\ell^2$ loss and with batch normalization vs. without. 
We see nearly identical results in our results for each prediction task across the the two loss functions and with/without batch normalization, except for one instance, $\ell^2$ loss with batch normalization is worse. 
GRU-D non-private baseline results of our method compared to previous works reside in Table \ref{tab:nonprivate-grud}. 
Our results are very close but slightly below Wang et al. \cite{wang2020mimic}, yet better than Suriyakumar et al. \cite{suriyakumar2021chasing}. 
The gains in GRU-D's performance of ours and Wang et al.'s results may be due to differences in data or other implementation details. Specifically, our hyperparameter search found that a hidden size of about 35-40 performed best, which is much smaller than the 67 hidden size of previous works \cite{suriyakumar2021chasing, che2018recurrent}.

Our results show GRU-D, a much more expressive model than LR, does indeed surpass LR in AUC for non-private and each private training method in all experiments, respectively. 
This trend is seen in Che et al.\cite{che2018recurrent} and Wang et al. \cite{wang2020mimic}, but results are the inverse (LR is more accurate) in results of Suriyakumar et al. \cite{suriyakumar2021chasing}.

We found substantial limitations to the DPSGD RPD accountant compared to the SOTA accounting method, PRV. 
By using the PRV accounting method, we achieve state-of-the-art results for DPSGD with both LR and GRU-D models on both ICU Mortality and Length of Stay tasks over the previous works' health care prediction results \cite{wang2020mimic, suriyakumar2021chasing}. 
% To the best of our knowledge, the state-of-the-art for a differentially private LR model on MIMIC-III is given in 
Suriyakumar et al. \cite{suriyakumar2021chasing}  report LR model AUC for ``low privacy'' ($\varepsilon = 3.5 \mathrm{e}5$, $\delta = 1\mathrm{e}{-5}$) and ``high privacy'' ($\varepsilon = 3.5$, $\delta = 1\mathrm{e}{-5}$) on both ICU Mortality and Length of Stay tasks. 
Privacy of $\varepsilon>10$ is widely considered ineffective \cite{ponomareva2023how}; hence, the ``low privacy'' $\varepsilon = 3.5\mathrm{e}5$ is practically unreasonable but perhaps useful for investigating sensitivity of this parameter. 

Our replication of the same experiment produced LR models trained with DPSGD that greatly improve on this prior state-of-the-art. 
On ICU Mortality, Suriyakumar et al. \cite{suriyakumar2021chasing} report 60\% AUC for $(3.5, 1\mathrm{e}{-5})$-DP, whereas we are able to achieve 76\% AUC for the same $(3.5, 1\mathrm{e}{-5})$-DP. 
Our DPSGD implementations are able to achieve $72-76\%$ AUC on this task for $0.5 \leq \varepsilon \leq 10$ and maintain AUC above 60\% until $\varepsilon<1\mathrm{e}{-1}$. For the Length of Stay task, 
Suriyakumar et al. \cite{suriyakumar2021chasing} report 60\% AUC  for $(3.5, 1\mathrm{e}{-5})$-DP, yet our models have 67\% AUC for the same $\varepsilon$, which is only slightly below the non-private baseline. 
Again, we do not see an AUC below 60\% until $\varepsilon < 1\mathrm{e}{-1}$. Much of this improvement can be attributed to using the PRV Accounting method, which allowed us to use much less noise to achieve the same level of privacy.

For the Length of Stay task, Suriyakumar et al. \cite{suriyakumar2021chasing} report an AUC of 61\% for GRU-D with $(3.5, 1\mathrm{e}{-5})$-DP. 
We reach 70\% AUC with DPSGD for the same level of privacy. This is better than the AUC reported by Suriyakumar et al. \cite{suriyakumar2021chasing} for both the non-private LR and GRU-D models and we are able to maintain an AUC greater than 65\% for $\varepsilon > 1\mathrm{e}{-2}$ with DPSGD. As we discussed in the prior section, much of the gains in AUC for GRU-D with DPSGD are likely due using the PRV Accounting method. 
Some of the gains could also be attributed to the same causes that provided the large gains we made for the non-private GRU-D model. 
Suriyakumar et al. \cite{suriyakumar2021chasing} show a sharp decrease in AUC when training GRU-D with DPSGD on ICU Mortality task, even in the ``low privacy'' case with $\varepsilon = 3\mathrm{e}5$. 
They report 59\% AUC for this high $\varepsilon$ and 53\% AUC for the ``strong privacy'' case, $(3.5, 1\mathrm{e}{-5})$-DP, which is not much better than random guessing. 
Our DPSGD PRV implementation of GRU-D achieves median 82.7\% AUC for $(3.5, 1\mathrm{e}{-5})$-DP using GRU-D on the Mortality prediction task. 
This is comparable to the best non-private LR model that we achieved, and is better than the performance \cite{suriyakumar2021chasing} %and \cite{che2018recurrent} 
reported for non-private GRU-D. 
Our DPSGD results achieve AUC greater than 60\% for each $\varepsilon > 1\mathrm{e}{-2}$ in this prediction task.

\subsection{Training Time Details}
\label{sec:appendix-times}
The details of this subsection correspond with performance results in Sec. \ref{sec:times}. 
Note that either by choice or a bug, PyTorch does not expose the ten runs' times, nor statistics beside their mean \url{https://github.com/pytorch/pytorch/issues/106801}. 
In the case of the non-private BCE loss, the number of epochs and batch size were larger, likely attributing to the increased training time. 
Raw results for each training method, task, and $\varepsilon$ appear in the repository. %under {\texttt results/mimic/<model>_<training>/benchmarks0/<task><loss>_results/}. 

% Hyperparameters (e.g. epochs) vary with $\varepsilon$, and so training time will too.  %may have hyperparameters that led it to take more or less time than this reported mean. 

\begin{table}[h]
\centering
\caption{Bayesian Inference and MAP Estimate Results using Logistic Regression on the Mortality ICU Benchmark}
\label{tab:uq-results}
\begin{tabular}{lrrrr}
\toprule
$\tilde{\varepsilon}$ &  0.001 &  0.01 &  0.1 &    1 \\
\midrule
Bayesian AUC         &   0.79 &  0.81 & 0.80 & 0.80 \\
% auc\_frequentist\_1   &   0.79 &  0.81 & 0.78 & 0.80 \\
MAP AUC &   0.79 &  0.81 & 0.78 & 0.80 \\
\bottomrule
\end{tabular}
\end{table}

\begin{table*}[h]
\centering
\caption{Length of Stay Benchmark Timing Results}
\label{tab:los_mean_times}
\begin{tabular}{cc ccc ccc}
\toprule
& & \multicolumn{3}{c}{Logistic Regression} & \multicolumn{3}{c}{GRU-D} \\
        \cmidrule(lr){3-5} \cmidrule{6-8}
\multirow{2}{*}{Training} & \multirow{2}{*}{Loss} & Training & Computing & \multirow{2}{*}{Total} & Training & Computing & \multirow{2}{*}{Total} \\
    &   &  Time & $\varepsilon$ & &  Time & $\varepsilon$ & \\
\cmidrule(lr){1-2}              \cmidrule(lr){3-5} \cmidrule{6-8}
\multirow{2}{4em}{Non-Private} & $\ell^2$ & 1.57 ms & $-$ & 1.57 ms & .81 ms & $-$ & .81 ms\\
 & BCE & 1.51 ms & $-$ & 1.51 ms & 1.46 ms & $-$ & 1.46 ms\\
 \cmidrule(lr){1-2} \cmidrule(lr){3-5} \cmidrule{6-8}
 \multirow{2}{4em}{DPSGD} & $\ell^2$ & 1.41 ms & 1.36 ms & 2.77 ms & 1.8 ms & 1.5 ms & 3.3 ms\\
 & BCE & 1.36 ms & 1.3 ms & 2.63 ms & 1.67 ms & 1.76 ms & 3.45 ms \\ 
  \cmidrule(lr){1-2} \cmidrule(lr){3-5} \cmidrule{6-8}
ExpM+NF & RKL+$\ell^2$ & 1.74 ms & $-$ & 1.74 ms & 1.81 ms & $-$ & 1.81 ms
 
\\
\bottomrule
\end{tabular}
\end{table*}

 % In this section, we provide benchmark timings for training for our non-private, DPSGD and ExpM+NF implementations, and in the case of DPSGD, computing $\varepsilon$. % and the \texttt{noise_multiplier}. 
 % We found that the best models for each of the different methods depended highly on the hyperparameter search results, and 
 % We only provide timing benchmarks for the best hyperparameters that were used for the test set. 

% Table \ref{tab:times} reports mean timings for ICU Mortality Task for both LR and GRU-D models. 
% Timing results for Length of Stay are similar, and reside in Appendix Table \ref{tab:los_mean_times}, as well as more details on DPSGD timing (Tables \ref{tab:dpsgd_los_mean_times}, \ref{tab:dpsgd_mort_mean_times}) and ExpM+NF training times (Table \ref{tab:expm_mean_times}).
% In nearly every case, the non-private implementation takes the least total time, as is expected. 
% LR and GRU-D models took similar amounts of time to train, likely to due the number of epochs required (GRU-D tended to need much fewer epochs than LR). 
% ExpM+NF training time is more than DPSGD's training time, but when we add the additional time that DPSGD needs to compute $\varepsilon$, ExpM+NF performs better. 
% There is little computational trade-off between DPSGD and ExpM+NF in these experiments. 

% \subsection{Adult Dataset Experiment}
% Variance of the NFs' AUC distributions was less than in the Adult Data experiment. We include figures and discussion in the Appendix \ref{sec:appendix-box-plots}. 

\section{Details on Bayesian Learning and UQ with ExpM+NF}
\label{sec:appendix-uq}
Setting $p(y|x,t)  \propto \exp(-\varepsilon l(x,y,\theta)/(2s))$ and prior $p(\theta)= \exp(-r(\theta))$ for regularization function $r,$ we see  the posterior $p(\theta| X,Y) = \prod_{X,Y} p(y_x,t) p(\theta) / Z = p_{ExpM}(\theta| X, Y )$.
Sampling $\theta_i\sim p_{NF}(\theta |X,Y) $ is rapid as we simply sample a batch $z_i\sim p_z = N(0, \sigma I)$ (our base density), then push them through the NF network (without tracking gradients). 
Finally, for inference, given $x$, first note that since we are in the binary classification setting  
\begin{align*}
    \int_y yp(y|x,\theta) dy = 1 p(y = 1|x, \theta) + 0 p(y = 0|x, \theta) = p(y = 1|x, \theta).
\end{align*}
Next, we need some independence assumptions, namely that $p(y|x) = p(y|x, X,Y) $ (output $y$ is independent of the training data), and $\theta| (X,Y)$ is independent of $x, y$. 
Finally, we marginalize to compute the prediction: 
\begin{align*}
\mathbb{E}_y(y|x) &= \int_y y p(y|x) dy \\
&= \int_y y p(y|x, X,Y) dy \\
&= \int_y y \int_\theta  p(y, \theta |x, X,Y) d\theta dy \\
&= \int_y y \int_\theta  p(y|x, \theta) p(\theta|X,Y) d\theta dy \\
&= \int_\theta  p(y =1 |x,\theta)  p(\theta|X,Y) d\theta \approx \tfrac{1}{N} \sum_i p(y =1 |x,\theta_i).
\end{align*}

As a simple example, we train an LR classifier using ExpM+NF on the Mortality ICU benchmark, sample $N = 10,000$ model parameters $\theta_i$, and compare the AUC results using Bayesian inference as described above against the MAP estimate (using $\theta_{i_0} = \argmax \{p(\theta_i|X,Y)\}$). Results are in Table \ref{tab:uq-results}.

\section{Further Likelihood Ratio Attack Results}
\label{sec:appendix-lira}
Figure \ref{fig:grud_mort_icu_mia_auc} and Figure \ref{fig:grud_los3_mia_auc} show the median ROC curve for the Likelihood Ratio Attack against 50 GRU-D target models. For the ICU Mortality task, we see that all three methods (non-private, DPSGD and ExpM+NF) the attack is unsuccessful. For the Length of Stay task, the non-private baseline method was unsuccessfully attacked, as was ExpM+NF for all but $\varepsilon = .0001$. Interestingly, DPSGD was successfully attacked for $\varepsilon = 2$ and $\varepsilon =  7$, even though the non-private baseline could not be attacked. This shows that even models with a differential privacy guarantee can be successfully attacked. Much like the Logistic Regression model results, we view these as inconclusive.
\begin{figure*}
    \centering
    \includegraphics[width=\textwidth]{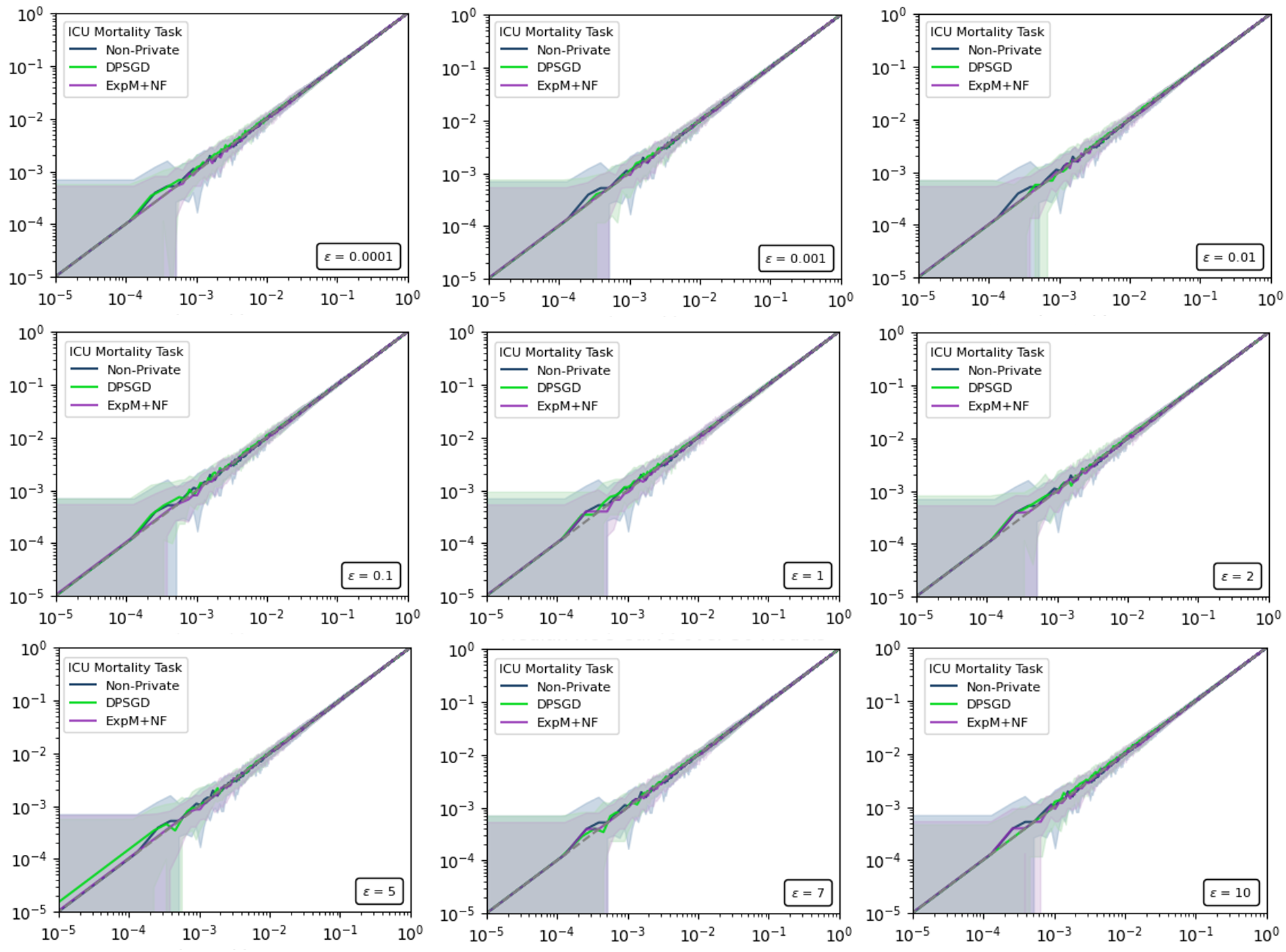}
    \caption{Likelihood Ratio Attack (median) ROC curve with 5th and 95th percentiles shaded on a log-log for GRU-D Target Models on Mortality Task.}
    \label{fig:grud_mort_icu_mia_auc}
    \vspace{-.25cm}
\end{figure*}

\begin{figure*}
    \centering
    \includegraphics[width=\textwidth]{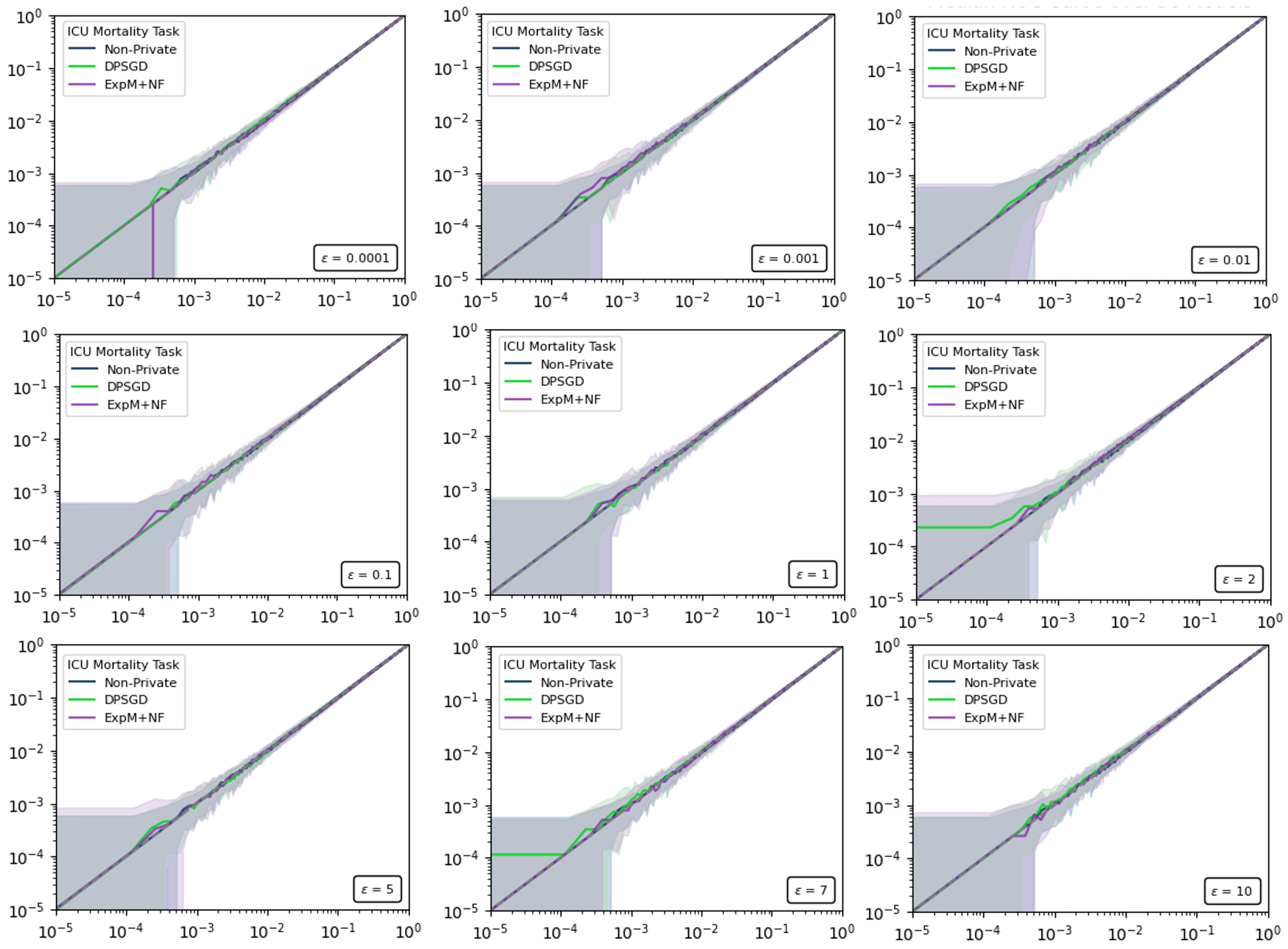}
    \caption{Likelihood Ratio Attack (median) ROC curve with 5th and 95th percentiles shaded on a log-log for GRU-D Target Models on Length of Stay Task.}
    \label{fig:grud_los3_mia_auc}
    \vspace{-.25cm}
\end{figure*}

%%%%%%%%%%%%%%%%%%%%%%%%%%%%%%%%%%%%%%%%%%%%%%%%%%%%%%%%%%%%

% \newpage
%\input{99-checklist}
\end{document}